\documentclass[lettersize,journal]{IEEEtran}

\usepackage{amsmath}
\usepackage{amssymb}
\usepackage{amsfonts}
\usepackage{mathrsfs}
\usepackage{mathtools}
\usepackage{amsthm}
\usepackage{algorithm}
\usepackage{algorithmic}
\usepackage{array}
\usepackage{booktabs}
\usepackage{makecell}
\usepackage{graphicx}
\usepackage{textcomp}
\usepackage{stfloats}
\usepackage{url}
\usepackage{cite}
\usepackage{pifont}
\usepackage[table,xcdraw]{xcolor}
\usepackage{multirow}
\usepackage{adjustbox}
\usepackage{diagbox}
\usepackage{microtype}
\usepackage{hyperref}
\usepackage[capitalize,noabbrev]{cleveref}

\theoremstyle{plain}
\newtheorem{theorem}{Theorem}[section]

\theoremstyle{definition}

\theoremstyle{remark}

\newcommand{\thickhline}{\noalign{\hrule height 1pt}}

\begin{document}

\title{LION: A Clifford Neural Paradigm for Multimodal-Attributed Graph Learning}

\author{
    \IEEEauthorblockN{
    Xunkai Li$^{\dagger,*}$,
    Zekai Chen$^{\dagger,*}$,
    Zhengyu Wu$^\dagger$,
    Henan Sun$^\dagger$,
    Daohan Su$^\dagger$,
    Guang Zeng$^\#$,
    Hongchao Qin$^\dagger$, \\
    Rong-Hua Li$^\dagger$,
    Guoren Wang$^\dagger$}
    
    \IEEEauthorblockA{
    $^\dagger$ Beijing Institute of Technology, Beijing, China; $^*$ Equal contribution.}
    
    \IEEEauthorblockA{
    $^\#$ Ant Group, Hangzhou, China}
    
    \IEEEauthorblockA{
    cs.xunkai.li@gmail.com,
    zackchen02@163.com,
    jeremywzy96@outlook.com,
    magneto0617@foxmail.com,
    dhsu@bit.edu.cn,
    zengguang\_77@qq.com,
    qhc.neu@gmail.com,
    lironghuabit@126.com,
    wanggrbit@126.com}
}

\markboth{IEEE Transactions on Knowledge and Data Engineering}%
{Li \MakeLowercase{\textit{et al.}}: LION: A Clifford Neural Paradigm for Multimodal-Attributed Graph Learning}

\maketitle

\begin{abstract}
    Recently, the rapid advancement of multimodal domains has driven a data-centric paradigm shift in graph ML, transitioning from text-attributed to multimodal-attributed graphs.
    This advancement significantly enhances data representation and expands the scope of graph downstream tasks, such as modality-oriented tasks, thereby improving the practical utility of graph ML.
    Despite its promise, limitations exist in the current neural paradigms:
    (1) Neglect Context in Modality Alignment: 
    Most existing methods adopt topology-constrained or modality-specific operators as tokenizers.
    These aligners inevitably neglect graph context and inhibit modality interaction, resulting in suboptimal alignment.
    (2) Lack of Adaptation in Modality Fusion: 
    Most existing methods are simple adaptations for 2-modality graphs and fail to adequately exploit aligned tokens equipped with topology priors during fusion, leading to poor generalizability and performance degradation.

    To address the above issues, we propose LION (c\underline{LI}ff\underline{O}rd \underline{N}eural paradigm) based on the Clifford algebra and decoupled graph neural paradigm (i.e., propagation-then-aggregation) to implement alignment-then-fusion in multimodal-attributed graphs. 
    Specifically, we first construct a modality-aware geometric manifold grounded in Clifford algebra.
    This geometric-induced high-order graph propagation efficiently achieves modality interaction, facilitating modality alignment.
    Then, based on the topology-aware Clifford components of aligned tokens, we propose adaptive holographic aggregation. This module integrates component-wise energy and propagation-scale information with learnable parameters to improve modality fusion.
    Extensive experiments on 9 text-image MAG datasets demonstrate that LION significantly outperforms SOTA baselines across 3 graph and 3 modality downstream tasks.
\end{abstract}

\begin{IEEEkeywords}
Multimodal-attributed graph learning, Clifford algebra, modality alignment, modality fusion.
\end{IEEEkeywords}

\section{Introduction}
\label{sec: Introduction}
    With the proliferation of multimodal data (e.g., images and texts) and vision-language models~\cite{radford2021_clip,zhang2022_facebook_opt,rombach2022_stable_diffusion}, multimodal-attributed graphs (MAGs) have recently emerged as a research frontier in the graph ML community. 
    In MAG, nodes denote real-world entities described by multimodal data, which substantially expands the semantic dimensionality of node features and elevates the upper bound of representation learning capability~\cite{yan2025magb,zhu2025mm_graph,liu2025graph_mllm,wang2025mg_llm}.
    This means that MAG not only offers immense potential for enhancing traditional graph-based tasks (e.g., node classification and link prediction)~\cite{kipf2016gcn,hamilton2017graphsage,velivckovic2017gat} but also broadens the task spectrum to encompass modality-oriented retrieval and generation~\cite{ning2025graph4mm,fang2025graphgpt_o,jin2024instructg2i}, thereby enhancing practical utility.
    Consequently, developing effective MAG neural paradigms (i.e., MAGNNs) has become an urgent necessity~\cite{hu2021mmgcn,tao2020mgat,li2025c_mag,cai2024omg_nas}.
    Despite the remarkable progress made by recent studies, some inherent limitations still exist. 
    Our in-depth analysis is as follows.

    \textbf{{Limitation: Neglect MAG-Context in Modality Alignment.}}
    We argue that most existing methods inherit unnecessary entity isolation, at the node and modality levels, from conventional graph-agnostic modality alignment strategies.
    Specifically, (1) some methods employ topology-constrained aligners~\cite{fan2025mlaga,fang2025graphgpt_o,ning2025graph4mm,jin2024instructg2i}. 
    While they aim to mitigate the receptive field limitations of the target nodes (i.e., single-entity) in MAG by incorporating broader node contexts, their scope remains restricted to 1-hop neighbors.
    Consequently, they still fail to capture the long-range dependencies, which have been proven to be highly beneficial in graphs~\cite{2019appnp,chien2021gprgnn,frasca2020sign,zhang2021ndls,li2024atp,liu2025sigma}.
    (2) While other methods explicitly leverage graph topology to facilitate global alignment, they remain restricted to modality-specific scenarios and rely on the same topology to apply identical alignment across different modalities~\cite{hu2025mig_gt,hu2025ntsformer,guo2025DMGC,he2025unigraph2}. 
    This coarse-grained alignment overlooks beneficial intra- and inter-modal interactions, thereby negatively impacting modality fusion.

    \textbf{Key Insight}.
    It is imperative to leverage modality-aware, high-quality topology to dissolve entity semantic boundaries in MAGs, thereby fostering beneficial, profound modality interactions and understanding for better alignment.

    \textbf{Solution and Evaluation}.
    To this end, we propose Clifford Geometric Propagation (CGP) to achieve topology-aware modality interaction for better alignment.
    Specifically, we first construct a MAG-specific geometric manifold where structural relationships are modeled as spatial rotations, and distinct modalities are represented as orthogonal geometric base vectors rather than modality-specific scale vectors.
    We then introduce a topology-aware geometric potential to characterize local semantic compatibility and regulate the contribution of neighboring signals during geometric transport.
    Together, the geometric potential and spatial rotor induce curvature-adaptive high-order graph propagation, enabling intra- and inter-modality interactions to facilitate modality alignment.
    To evaluate CGP, we illustrate the performance gains in Fig.~\ref{fig: empirical}(a) by employing CGP as a plug-and-play module to replace existing modality aligners.
    Despite several deeply coupled architectures that prevent broader plug-and-play evaluation, the consistent gains across these four representative cases support the effectiveness of CGP.

    \textbf{{Limitation: Lack of MAG-Adaptation in Modality Fusion.}}
    Given that enriched context, MAG significantly increases both the quantity and information density of aligned tokens, thereby presenting unique challenges.
    Specifically, (1) some methods directly employ Q-Former~\cite{li2023blip} to handle input scenarios involving topology-driven tokens (i.e., 1-hop neighbor tokens) by adaptive queries~\cite{fan2025mlaga,fang2025graphgpt_o,ning2025graph4mm,jin2024instructg2i}.
    Despite their simplicity and intuitiveness, these methods fail to explicitly leverage semantic-aware topology priors within the aligned tokens.
    (2) Although other methods attempt to enhance fusion by partitioning tokens into multiple topology- and modality-specific views and achieving fine-grained integration, such approaches offer only limited exploration of suboptimal aligned tokens and fail to deeply capture their intricate dependencies.~\cite{hu2025mig_gt,hu2025ntsformer,guo2025DMGC,he2025unigraph2,shen2025sr_gm}.
    
    \textbf{Key Insight}.
    It is imperative to reveal the intricate dependency among aligned tokens and design a comprehensive fusion mechanism to unleash their representation potential.

    \textbf{Solution and Evaluation}.
    After obtaining the propagated features in the geometric manifold (i.e., aligned tokens) via the CGP, we clarify that they naturally possess geometric grade properties, which encode the topology and modality insights.
    Based on this, we propose Adaptive Holographic Aggregation (AHA), which employs learnable parameters to explicitly model the energy and scale inherent in these geometric grades.
    This strategy functions as a dynamic filter, capturing the most relevant topology and modality insights for task-adaptive fusion representation. 
    To substantiate our claims, we present comprehensive results in Fig.~\ref{fig: empirical}(b), where LION demonstrates superior convergence and performance compared to the SOTA baselines.
    Furthermore, the complete LION (i.e., LION w/ AHA) significantly outperforms the LION w/o AHA variant, thereby validating the advantages of modality fusion in the holographic perspective.

\begin{figure}[t]
  \centering
  \includegraphics[width=\linewidth]{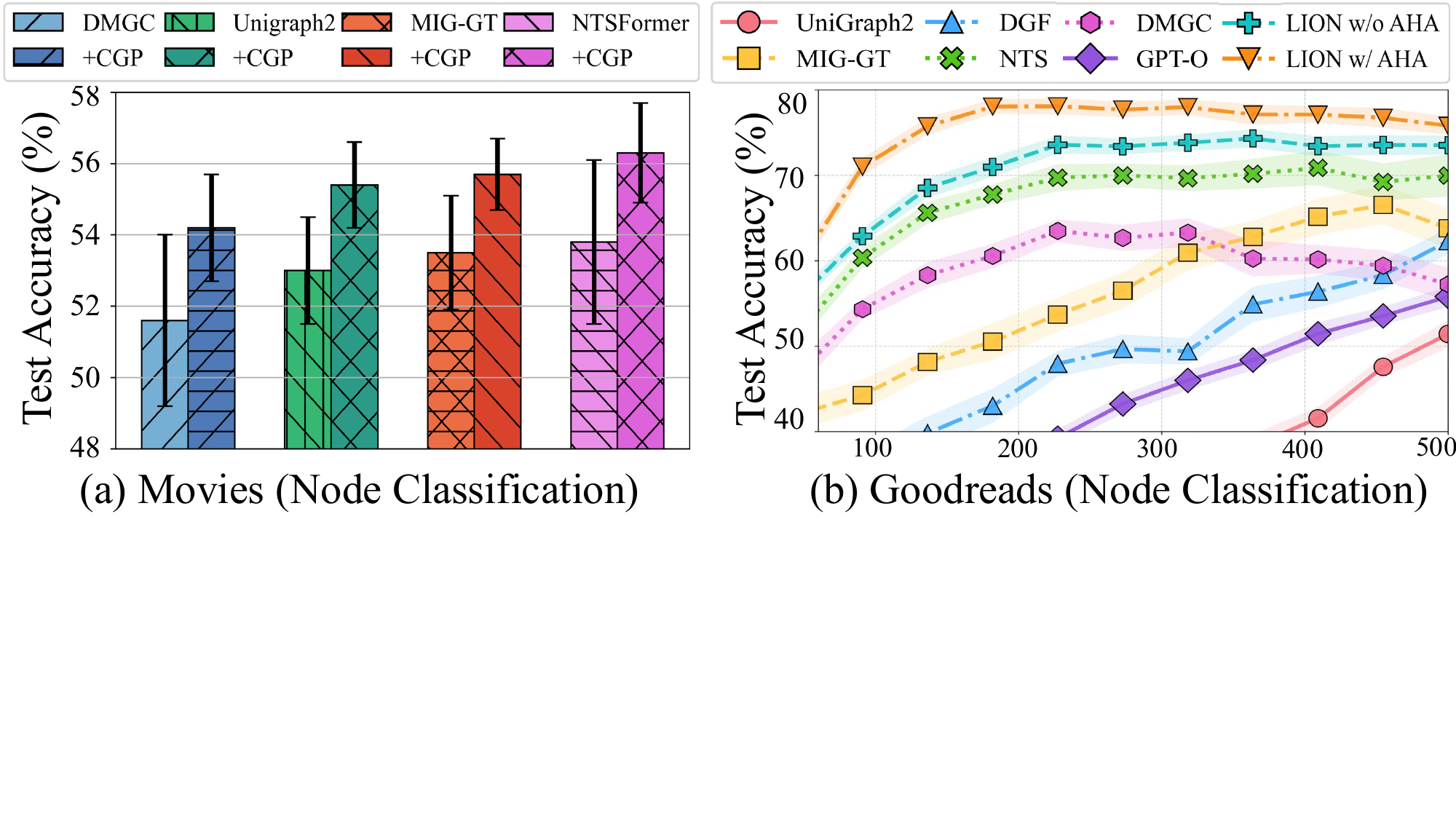}
  \caption{Empirical evaluations of CGP (a) and AHA (b), where the x-axis for AHA (b) is training time in seconds.}
  \label{fig: empirical}
\end{figure}

    \textbf{Contributions}.
    (1) \underline{\textit{New Perspective}}.
    We reveal the limitations of modality alignment and fusion in MAGs and introduce a mathematical framework grounded in Clifford algebra to address them.
    (2) \underline{\textit{Innovative Method}}.
    We propose LION (c\underline{LI}ff\underline{O}rd \underline{N}eural paradigm) based on this mathematical framework, which achieves alignment-then-fusion by propagation(CGP)-then-aggregation(AHA). 
    Specifically, CGP first constructs a MAG-specific geometric manifold to enable curvature-adaptive high-order graph propagation, achieving efficient modality interaction for better alignment. 
    Subsequently, AHA captures the energy of topology-aware Clifford components and their scale-dependent representations via learnable weights to facilitate modality fusion.
    (3) \underline{\textit{SOTA Performance}}.
    Evaluations from 6 domains demonstrate that LION achieves average improvements of 5.24\% and 7.68\% over SOTA baselines in graph and modality tasks.

\section{Preliminaries}

\subsection{Notations and Problem Formulation}
    MAG is defined as $\mathcal{G}=(\mathcal{V}, \mathcal{E}, \{\mathbf{X}^{(m)}\}_{m\in\mathcal{M}})$ where $\mathcal{V}$ is the set of nodes, $\mathcal{E}$ is the set of edges, and $\mathcal{M}$ is the set of available modalities (e.g., texts and images). 
    For each node $v_i$ and modality $m$, modality-specific attribute vector ${x}_i^{(m)} \in \mathbb{R}^{d_m}$ constitutes complete $\mathbf{X} \in \mathbb{R}^{N \times d}$.
    The modality-shared topology is described by the same adjacency matrix $\mathbf{A}$, and $\mathbf{D}$ is the corresponding diagonal degree matrix.
    We further define the symmetric normalized graph Laplacian as $\tilde{\mathbf{L}} =\mathbf{I}-\tilde{\mathbf{D}}^{-1/2}\tilde{\mathbf{A}}\tilde{\mathbf{D}}^{-1/2}$ with self-loop ($\tilde{\mathbf{A}}=\mathbf{I}+\mathbf{A}$), which captures structural smoothness in graph signal processing~\cite{bruna2014spectral,2016Convolutional_ChebNet}.
    This formulation separates modality-specific attributes from the modality-shared topology, allowing us to analyze alignment and fusion as two coupled but distinct operations.
    It also clarifies the role of graph structure: topology is not merely an auxiliary input, but a shared geometric prior that regulates how multimodal signals interact across nodes.
    Accordingly, the learned representation should preserve both modality-specific semantics and topology-induced relational context.

    \textbf{Graph Tasks}.
    (1) Node Classification: predict the specific label class of unlabeled nodes~\cite{Yang16cora,shchur2018amazon_datasets,hu2020ogb}; 
    (2) Link Prediction: predict if $(u,v)\in\mathcal{E}$ exists in the edge sets~\cite{Zhang18link_prediction1,cai2021link_prediction2}; 
    (3) Node Clustering: partition all nodes into clusters using off-the-shelf clustering algorithms applied to the learned representations~\cite{blondel2008louvain,karypis1998metis}.
    These graph-centric tasks examine whether learned representations preserve structural proximity, class-level semantics, and community-level organization.
    They therefore provide a direct test of whether multimodal attributes can enhance classical graph learning beyond purely topology-driven modeling.

    \textbf{Multimodal Tasks}. 
    (1) Modality Retrieval: given a query in a specific modality (e.g., text), retrieve the corresponding cross-modal representation;
    (2) Text Generation: generate text responses conditioned on the target node, instructions, and graph contexts; 
    (3) Image Generation: generate visual responses using diffusion models conditioned on the target node's textual descriptions and the graph contexts.
    Compared with graph-centric tasks, these modality-oriented tasks require the model to preserve fine-grained cross-modal semantics while respecting graph-induced dependencies.
    This dual requirement makes MAG learning fundamentally different from applying a graph encoder after independent multimodal feature extraction.

\subsection{MAG Neural Paradigm}
    \textbf{Conventional Methods.} 
    These approaches typically adapt previous multimodal architectures to MAGs by treating structural information as auxiliary tokens. 
    The core idea is linearizing neighborhoods into sequences~\cite{ning2025graph4mm,fang2025graphgpt_o} or projecting topology priors into soft tokens~\cite{fan2025mlaga,jin2024instructg2i}. 
    While leveraging pre-trained knowledge, these methods rely on concatenation or rigid alignment, lacking a unified integration where topology and modality are unified in a continuous space.
    As a result, the graph often functions as an external context provider rather than a principle that governs modality interaction.
    Such a design is effective for injecting local context, but it may underuse topology when long-range or cross-modality dependencies are required.

    \textbf{Graph-enhanced Methods.} 
    These approaches explicitly design specialized model architectures for MAGs-such as refined message-passing~\cite{gilmer2017neural,hu2025mig_gt,hu2025ntsformer}, spectral graph filtering~\cite{bruna2014spectral,2016Convolutional_ChebNet,shen2025sr_gm,guo2025DMGC}, and structure-conditioned modules~\cite{he2025unigraph2,zheng2025dgf}-to capture topology and modality insights. 
    However, these approaches often treat multimodal attributes and graph topology as separate semantic spaces, relying on well-designed but suboptimal alignment and fusion components.
    This motivates a paradigm that represents modalities and topology within a shared geometric space, so that alignment and fusion can be derived from the same underlying operators.

\section{Methodology}
\label{sec: Methodology}
\subsection{Clifford Algebra and Geometric Manifold}
\label{sec: Clifford Algebra and Geometric Manifold}
    In this paper, we construct a mathematical framework specifically tailored for MAGs based on Clifford algebra~\cite{hestenes2012clifford_1,lounesto2001clifford_2,dorst2009clifford_3}, which extends the modality-specific graph operations to modality-aware high-dimensional geometric manifold spaces.
    Let $\mathcal{C}l_{n}$ denote the Clifford algebra defined over an $n$-dimensional space. 
    By equipping each node with a local tangent space isomorphic to $\mathcal{C}l_{n}$ and modeling edges as geodesic connections between them, we conceptualize the MAG as a discrete geometric manifold.
    The fundamental graph operation is the geometric product, which unifies spatial rotations (topology) with orthogonal geometric base vectors (modality).
    
    Specifically, for any two connected nodes $u, v$ with attribute vectors $x_u, x_v$, their topology-aware modality interaction (i.e., transform modality-aware geometric base vectors via spatial rotations in the geometric manifolds) is formalized as $x_u x_v = x_u \cdot x_v + x_u \wedge x_v$.
    This geometric product explicitly decomposes the multimodal interaction:
    (1) The symmetric inner product (${x}_u \cdot {x}_v$) induces a modality-specific projection, facilitating intra-modality interaction.
    (2) The antisymmetric outer product ($x_u \wedge x_v$) instantiates a bi-vector plane that encodes the topology curvature induced by cross-modality discrepancies, facilitating inter-modality interaction.
    This geometric manifold, spanned by Clifford algebra, provides a mathematical framework for topology-aware intra- and inter-modality interaction. 
    Based on this, it enables modality alignment through topology-driven, curvature-induced spatial rotations (i.e., graph propagation) and facilitates modality fusion by capturing the geometric grade properties of aligned tokens (i.e., message aggregation).

\subsection{Clifford Geometric Propagation}
\label{sec: Clifford Geometric Propagation}
\textbf{Motivation.}
    In order to achieve efficient modality alignment via the CGP (i.e., modality-aware, curvature-adaptive high-order graph propagation within the geometric manifold), it is essential to capture topology insights from the curved manifold to facilitate multimodal interactions.
    Consequently, we introduce parallel transport, which aligns the distinct semantic spaces of multiple modalities within connected nodes by rotating modality-aware orthogonal geometric basis vectors along the topology curvature.
    This high-dimensional transformation models modality interactions in a shared Clifford space and is mathematically formulated via the spatial rotor $\mathcal{R}$ and geometric potential $\Phi$.
    Collectively, they modulate the principles of intra- and inter-modality
interaction within a modality-aware geometric manifold.
The rotor provides norm-preserving local transport, while the geometric
potential controls the contribution of transported signals, together
enabling stable modality alignment for MAGs.

\textbf{Modality-oriented Clifford Initialization.}
    To begin with, we need to initialize $\mathbf{X}$ to support CGP.
    Since modality attributes may have unequal dimensions $d_k$, a modality-specific encoder/projection $g_k:\mathbb{R}^{d_k}\!\to\mathbb{R}^{d}$ first maps them into a shared feature dimension, i.e., $\bar{x}_{u}^{(k)}=g_k(x_u^{(k)})$; $g_k$ is the identity when the dimensions already match.
    We then use stabilized per-modality normalization $\hat{x}_{u}^{(k)}=\bar{x}_{u}^{(k)}/\max(\|\bar{x}_{u}^{(k)}\|_2,\epsilon_x)$ and assign modality $k$ to the orthogonal Grade-1 basis $e_k\in\mathcal{C}l_K$:
    \begin{equation}
    \label{eq: modality-oriented Clifford initialization}
        \mathbf{H}_u^{(0)}\in\mathbb{R}^{d\times2^K}\coloneqq \frac{1}{\sqrt K}\sum_{k=1}^{K} \hat{x}_{u}^{(k)}e_k, \|\mathbf{H}_u^{(0)}\|_{\mathcal{C}l}\le 1.
    \end{equation}
    This definition also makes the interaction semantics explicit. For feature coordinate $j$, let $h_{u,j}=K^{-1/2}\sum_k\hat{x}_{u,j}^{(k)}e_k$. Its geometric product satisfies
    \begin{equation}
    \begin{aligned}
    \langle h_{u,j}h_{v,j}\rangle_0 &= \frac{1}{K}\sum_k \hat{x}_{u,j}^{(k)}\hat{x}_{v,j}^{(k)},\\
    \langle h_{u,j}h_{v,j}\rangle_2 &= \frac{1}{K}\sum_{p<q}(\hat{x}_{u,j}^{(p)}\hat{x}_{v,j}^{(q)}-\hat{x}_{u,j}^{(q)}\hat{x}_{v,j}^{(p)})e_pe_q.
    \end{aligned}
    \label{eq: component_mapping}
    \end{equation}
    Thus, Grade-0 aggregates same-modality compatibility, whereas each Grade-2 coefficient represents an oriented pairwise cross-modality discrepancy; an ordered modality pair is not identified one-to-one with a Clifford basis component. The formulation is algebraically defined for $K$ modalities, while all experiments in this paper use the prevalent $K=2$ text-image setting; empirical generality beyond two modalities is not claimed.

\textbf{Topology-oriented Geometric Potential.}
    After Clifford initialization, we capture the semantic curvature between connected nodes through feature-wise geometric interactions.
    For each feature coordinate $j\in\{1,\ldots,d\}$ and edge $(u,v)\in\mathcal E$, define
    \begin{equation}
    \label{eq: featurewise_interaction}
    \begin{aligned}
    \mathcal S_{uv,j}
    &=
    \left\langle
    h_{u,j}h_{v,j}
    \right\rangle_0,\\
    \mathcal B_{uv,j}
    &=
    \left\langle
    h_{u,j}h_{v,j}
    \right\rangle_2 .
    \end{aligned}
    \end{equation}
    Here $\mathcal S_{uv,j}$ measures same-modality compatibility, whereas $\mathcal B_{uv,j}$ characterizes the oriented pairwise cross-modality discrepancy defined in Eq.~(\ref{eq: component_mapping}).

    Based on these quantities, the modality-adaptive geometric potential for coordinate $j$ is
    \begin{equation}
    \label{eq: topology-oriented Clifford initialization}
        \Phi_{uv,j}
        =
        \exp\!\left(
        -
        \frac{
        \|\mathcal B_{uv,j}\|_{\mathcal Cl}^{2}
        }{
        |\mathcal S_{uv,j}|+\epsilon
        }
        \right),
        (u,v)\in\mathcal E .
    \end{equation}
    The scalar term measures semantic compatibility, while the bi-vector norm measures the magnitude of the corresponding cross-modality interaction plane.
    The exponential decay kernel therefore converts their relative discrepancy into a bounded feature-wise propagation weight.
    This construction provides the topology-aware interaction statistics used by the subsequent parallel transport.
    
\textbf{Training-free Geometric Propagation.}
    Based on the above feature-wise geometric statistics, we extend conventional graph propagation to Clifford-valued parallel transport.
    Specifically, $\mathcal S_{uv,j}$ measures local semantic compatibility, while $\mathcal B_{uv,j}$ determines the oriented interaction plane for geometric transport.
    For each edge $(u,v)$ and feature coordinate $j$, let
    \begin{equation}
    \label{eq: stabilized_rotor}
    \begin{aligned}
        \widehat{\mathcal B}_{uv,j}
        &=
        \frac{
        \mathcal B_{uv,j}
        }{
        \sqrt{
        \|\mathcal B_{uv,j}\|_{\mathcal Cl}^{2}
        +\epsilon_r^{2}
        }
        },\\
        \theta_{uv,j}
        &=
        \operatorname{atan2}
        \left(
        \|\mathcal B_{uv,j}\|_{\mathcal Cl},
        |\mathcal S_{uv,j}|+\epsilon_r
        \right),\\
        \mathcal R_{uv,j}
        &=
        \exp
        \left(
        -\frac{
        \theta_{uv,j}\widehat{\mathcal B}_{uv,j}
        }{2}
        \right).
    \end{aligned}
    \end{equation}
    Here, $\operatorname{atan2}(\cdot,\cdot)$ denotes the two-argument arctangent used to determine a stable rotation magnitude.
    The stabilized rotor continuously approaches the identity as
    $\|\mathcal B_{uv,j}\|_{\mathcal Cl}\rightarrow0$.

    The rotor determines how a neighboring representation is geometrically transported, while the geometric potential controls its contribution.
    We further introduce a self-loop potential
    $\bar{\Phi}_{uu,j}=1$ and set
    $\bar{\Phi}_{uv,j}=\Phi_{uv,j}$ for $(u,v)\in\mathcal E$.
    Feature-wise row normalization gives
    \begin{equation}
    \label{eq: potential_normalization}
        \widetilde{\Phi}_{uv,j}
        =
        \frac{
        \bar{\Phi}_{uv,j}
        }{
        \sum_{w\in\mathcal N(u)\cup\{u\}}
        \bar{\Phi}_{uw,j}
        }.
    \end{equation}

    Define the feature-wise transport
    \begin{equation}
    \label{eq: featurewise_transport}
        \mathcal T_{uv,j}(h)
        =
        \mathcal R_{uv,j}
        h
        \mathcal R_{uv,j}^{-1},
    \end{equation}
    with $\mathcal T_{uu,j}$ being the identity.
    Accordingly, the CGP update is
    \begin{equation}
    \label{eq: Training-free Geometric Propagation}
        h_{u,j}^{(l)}
        =
        \sum_{v\in\mathcal N(u)\cup\{u\}}
        \widetilde{\Phi}_{uv,j}
        \mathcal T_{uv,j}
        \left(
        h_{v,j}^{(l-1)}
        \right),
        \qquad j=1,\ldots,d.
    \end{equation}
    Equivalently, Eq.~(\ref{eq: Training-free Geometric Propagation}) defines a block-diagonal connection operator over the direct-sum feature space
    \begin{equation}
        \mathscr H
        =
        \bigoplus_{j=1}^{d}
        \mathcal Cl_K^{(1)}.
    \end{equation}
    Thus, each feature coordinate is transported by its own curvature-dependent Clifford connection, while all coordinates share the same graph topology.
    The normalized connection average keeps the propagation bounded, and the rotor sandwich preserves the Grade-1 state space, providing topology-aware modality components for the subsequent AHA module.

\textbf{Theoretical Analysis.}
    We first establish the geometric stability bound of the Clifford construction (Theorem~\ref{theorem1}; Supplementary Appendix A). We then analyze the normalized CGP operator in Eq.~(\ref{eq: Training-free Geometric Propagation}) under standard fixed-connection assumptions (Theorem~\ref{theorem2}; Supplementary Appendix B).
    \begin{theorem}
    \label{theorem1}
    \textbf{(Stability Bound of Clifford Manifold).}
    Let $f$ obtain $\mathbf H$ and $\mathcal A_{\mathcal G}=\{\Phi,\mathcal R\}$. Assume the modality encoders/projections are bounded Lipschitz maps and $\epsilon_x,\epsilon_r>0$ are fixed. For bounded perturbations of $(\mathbf X,\mathbf A)$,
    \begin{equation}
    \label{eq: theorem1}
    \begin{aligned}
    &\|\mathbf H-\mathbf H'\|_{\mathcal Cl}+\|\mathcal A_{\mathcal G}-\mathcal A'_{\mathcal G}\|_{\mathcal Cl}\\
    &\le K_{\rm map}(\|\mathbf X-\mathbf X'\|_{\mathcal Cl}+\gamma\|\mathbf A-\mathbf A'\|_{\mathcal Cl}).
    \end{aligned}
    \end{equation}
    where $K_{\rm map}$ and $\gamma$ depend on the encoder/projection bounds, graph degree, and stabilizers $\epsilon_x,\epsilon_r$.
    \end{theorem}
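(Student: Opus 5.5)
The plan is to prove the bound as a chain of Lipschitz compositions. The map $(\mathbf X,\mathbf A)\mapsto(\mathbf H,\Phi,\mathcal R)$ factors into four stages: the encoders $g_k$, the stabilized normalization, the Clifford embedding of Eq.~(\ref{eq: modality-oriented Clifford initialization}), and the edgewise statistics of Eqs.~(\ref{eq: featurewise_interaction})--(\ref{eq: stabilized_rotor}). We bound each stage's Lipschitz constant separately and multiply them together. The topology perturbation enters only through the edge set over which $\Phi$ and $\mathcal R$ are defined, so it can be handled additively. Throughout we read $\|\cdot\|_{\mathcal Cl}$ on the edge-indexed family $\mathcal A_{\mathcal G}$ as a sum or $\ell_2$ norm over edges and feature coordinates. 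An edge present in only one graph is treated as a zero entry in the other, or as the identity rotor.

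\textbf{Node stage.} Each $g_k$ is $L_g$-Lipschitz and bounded by $B_g$. The map $x\mapsto x/\max(\|x\|_2,\epsilon_x)$ is $2/\epsilon_x$-Lipschitz: it is the projection onto the unit ball, rescaled, and it is smooth on each regime. The embedding $\frac{1}{\sqrt K}\sum_k \hat x^{(k)} e_k$ is linear and isometric up to $1/\sqrt K$ because the $e_k$ are orthonormal. Together these give $\|\mathbf H-\mathbf H'\|_{\mathcal Cl}\le (2L_g/\epsilon_x)\|\mathbf X-\mathbf X'\|$, and every $h_{u,j}$ stays in the unit ball.

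\textbf{Edge stage, fixed topology.} For a fixed edge and coordinate, $\mathcal S$ and $\mathcal B$ are bilinear in $(h_{u,j},h_{v,j})$ with inputs of norm at most $1$. Hence they are $1$-Lipschitz in each argument and satisfy $|\mathcal S|,\|\mathcal B\|\le 1$. The remaining maps are compositions of smooth functions on compact sets with stabilizers:
\begin{itemize}
\item The potential $\Phi=\exp(-\|\mathcal B\|^2/(|\mathcal S|+\epsilon))$ has an exponent whose gradient is bounded by roughly $2/\epsilon+1/\epsilon^2$, and $\exp$ is $1$-Lipschitz on $(-\infty,0]$.
\item $\widehat{\mathcal B}=\mathcal B/\sqrt{\|\mathcal B\|^2+\epsilon_r^2}$ is $1/\epsilon_r$-Lipschitz.
\item $\operatorname{atan2}(y,x)$ with $x\ge\epsilon_r$ has gradient at most $1/\epsilon_r$.
\item The rotor exponential $\exp(-\theta\widehat{\mathcal B}/2)=\cos(\cdot)+\dots$ is smooth with bounded derivative, since $\theta\le\pi/2$ and $\|\widehat{\mathcal B}\|\le 1$.
\end{itemize}
Summing over edges incident to each node introduces a factor of the maximum degree $d_{\max}$, because each $h_u$ appears in at most $d_{\max}$ edges. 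This yields $\|\mathcal A_{\mathcal G}-\mathcal A'_{\mathcal G}\|\le C(\epsilon,\epsilon_r)\,d_{\max}\|\mathbf H-\mathbf H'\|$.

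\textbf{Topology stage.} For entries where $\mathbf A$ and $\mathbf A'$ differ, we bound the corresponding $\Phi$ and $\mathcal R$ entries crudely. We have $\Phi\in[0,1]$, and $\|\mathcal R-1\|$ is bounded by a constant depending on $\epsilon_r$, since $\theta\le\pi/2$. Each changed entry of $\mathbf A$ therefore contributes at most a constant $\gamma_0$, giving a term $\gamma_0\|\mathbf A-\mathbf A'\|$ as long as $\mathbf A$ is treated as a $0/1$ matrix, so that $\|\mathbf A-\mathbf A'\|$ counts flipped edges up to the norm used. This crude counting is where boundedness of perturbations is used. Setting $K_{\rm map}=(1+C d_{\max})2L_g/\epsilon_x$ plus the constants, and $\gamma=\gamma_0/K_{\rm map}$, gives the statement.

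\textbf{Main obstacle.} The main difficulty is the topology term. Edge insertions and deletions are discrete, so the edge-indexed spaces for $\mathbf A$ and $\mathbf A'$ must be made comparable. This requires fixing a convention that aligns them, such as padding absent edges with $\Phi=0$ and $\mathcal R=1$, and checking that the stated norms make $\gamma$ finite.

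Two secondary issues need care. First, the kink of $|\mathcal S|$ at $0$ is only Lipschitz, not differentiable, so the bounds above must be stated through Lipschitz calculus rather than gradients. Second, $\max(\|x\|,\epsilon_x)$ must be handled across both of its regimes when establishing the $2/\epsilon_x$ constant.
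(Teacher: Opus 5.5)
Your proposal is correct and follows essentially the same route as the paper's proof: a Lipschitz bound for the stabilized normalization and the linear Grade-1 lift, bilinearity of the geometric product on unit-norm inputs, Lipschitz stabilized potential and $\operatorname{atan2}$ rotor, a bounded-degree factor, and an additive topology term. Your padding convention for edges present in only one graph does the same job as the paper's masking $\mathcal A_{\mathcal G}=\mathbf A\circ(\Phi,\mathcal R)$ with the product rule, which likewise splits the difference into a $C_1\delta_H$ term and a $C_2\|\mathbf A-\mathbf A'\|_{\mathcal Cl}$ term.
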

    \noindent\textit{Proof Sketch.}
    Stabilized normalization is Lipschitz for fixed $\epsilon_x>0$; the orthogonal Grade-1 lift is linear and norm bounded. The geometric product is bilinear, its grade projections are non-expansive, and the stabilized $\operatorname{atan2}$ rotor is continuous and Lipschitz on the resulting bounded domain, including the near-degenerate case. Combining these bounds with the edge perturbation term yields Eq.~(\ref{eq: theorem1}); details are in Supplementary Appendix A.

    \begin{theorem}
\label{theorem2}
\textbf{(Conditional Spectral Contraction of CGP).}
Assume an undirected graph, feature-wise symmetric potentials
$\Phi_{uv,j}=\Phi_{vu,j}$, reciprocal isometric transports
$\mathcal T_{vu,j}=\mathcal T_{uv,j}^{-1}$, and fixed edge
operators during one cached propagation stage.
Let $\mathcal P_{\mathcal G}$ denote the block-diagonal normalized
connection operator induced by
Eq.~(\ref{eq: Training-free Geometric Propagation})
on
$\mathscr H=\bigoplus_{j=1}^{d}\mathcal Cl_K^{(1)}$.
Let $\Pi$ be the orthogonal projection onto its unit-eigenspace,
and let $\rho<1$ denote the spectral radius of
$\mathcal P_{\mathcal G}$ restricted to the complementary invariant
subspace. Then
\begin{equation}
\label{eq: theorem2}
\left\|
\mathbf H^{(l)}-\Pi\mathbf H^{(0)}
\right\|_{D,\mathcal Cl}
\le
\rho^l
\left\|
\mathbf H^{(0)}-\Pi\mathbf H^{(0)}
\right\|_{D,\mathcal Cl}.
\end{equation}
\end{theorem}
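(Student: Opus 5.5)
The plan is to exhibit $\mathcal P_{\mathcal G}$ as a self-adjoint operator on $\mathscr H$ equipped with a degree-weighted inner product. This weighted inner product is what defines the norm $\|\cdot\|_{D,\mathcal Cl}$. Once self-adjointness is in hand, the bound follows from the spectral theorem together with invariance of the decomposition $\operatorname{range}\Pi\oplus\ker\Pi$.

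Because the operator is block-diagonal over $j$, it suffices to work with a single coordinate $j$ and then sum the squared norms. For fixed $j$, set $\bar d_{u}=\sum_{w\in\mathcal N(u)\cup\{u\}}\bar\Phi_{uw,j}$ and $D_j=\operatorname{diag}(\bar d_u)$. On $\mathscr H_j=\bigoplus_u \mathcal Cl_K^{(1)}$ define
\[
\langle f,g\rangle_{D}=\sum_u \bar d_u\langle f_u,g_u\rangle_{\mathcal Cl}.
\]
Write $(\mathcal P f)_u=\bar d_u^{-1}\sum_v \bar\Phi_{uv}\mathcal T_{uv}f_v$. We then obtain
\[
\langle \mathcal P f,g\rangle_D=\sum_{u,v}\bar\Phi_{uv}\langle \mathcal T_{uv}f_v,g_u\rangle_{\mathcal Cl}.
\]
Two facts make this expression symmetric in $f$ and $g$. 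First, the rotor sandwich is an isometry of Grade-1, so $\mathcal T_{uv}^{*}=\mathcal T_{uv}^{-1}$. Second, the assumptions give $\mathcal T_{uv}^{-1}=\mathcal T_{vu}$ and $\bar\Phi_{uv}=\bar\Phi_{vu}$. Relabelling $u\leftrightarrow v$ then shows $\langle \mathcal P f,g\rangle_D=\langle f,\mathcal P g\rangle_D$, so $\mathcal P_{\mathcal G}$ is $D$-self-adjoint. Equivalently, $D^{1/2}\mathcal P D^{-1/2}$ is a symmetric block matrix, which is a normalized connection adjacency in the sense of a connection Laplacian.

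Self-adjointness yields a $D$-orthonormal eigenbasis with real eigenvalues. Next we bound these eigenvalues by $|\lambda|\le 1$. The entries $\widetilde\Phi_{uv}$ are nonnegative and each row sums to one, and each $\mathcal T_{uv}$ is an isometry. Hence $\|\mathcal P f\|_{\infty}\le\|f\|_\infty$ in the blockwise max-norm, and so the spectral radius is at most $1$. Combined with self-adjointness, this shows that $\mathcal P$ is a $D$-contraction. We take $\Pi$ to be the $D$-orthogonal projection onto the $\lambda=1$ eigenspace; this is the sense in which the statement's "orthogonal projection" is to be read.

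Because $\mathcal P$ is self-adjoint, $\Pi$ commutes with $\mathcal P$ and the complement $\ker\Pi$ is invariant. This gives $\mathcal P\Pi=\Pi$, and therefore $\mathbf H^{(l)}-\Pi\mathbf H^{(0)}=\mathcal P^l(\mathbf H^{(0)}-\Pi\mathbf H^{(0)})$ with $\mathbf H^{(0)}-\Pi\mathbf H^{(0)}\in\ker\Pi$. On this invariant subspace $\mathcal P$ is still self-adjoint, so its operator norm equals its spectral radius $\rho$. Consequently $\|\mathcal P^l|_{\ker\Pi}\|_D=\rho^l$, and Eq.~(\ref{eq: theorem2}) follows. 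Summing over $j$ preserves the inequality, since each block is bounded by the same $\rho$, taken as the maximum over blocks.

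The main obstacle is the self-adjointness step, specifically the interplay between the rotor sandwich and the Clifford inner product. We need to verify that $\langle R a R^{-1},b\rangle_{\mathcal Cl}=\langle a,R^{-1}bR\rangle_{\mathcal Cl}$ on Grade-1. To do this we first check that the stabilized rotor of Eq.~(\ref{eq: stabilized_rotor}) is a genuine unit even versor, satisfying $R\tilde R=1$ because $\widehat{\mathcal B}$ is a bivector of norm $<1$ that squares to a negative scalar in the Euclidean signature. With that established, the sandwich acts as an $SO(K)$ rotation on Grade-1, and the required adjoint identity follows. A secondary concern is the self-loop potential $\bar\Phi_{uu}=1$. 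It must be included in $D$ so that normalization by $\bar d_u$ is consistent with the symmetric weights, and this is why the norm in the theorem is the $D$-weighted one rather than the plain norm.
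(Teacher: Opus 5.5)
Your proposal is correct and follows essentially the same route as the paper's proof: you establish self-adjointness of $\mathcal P_{\mathcal G}$ in the degree-weighted inner product from symmetric potentials and reciprocal isometric transports, then split $\mathbf H^{(0)}=\Pi\mathbf H^{(0)}+(\mathbf I-\Pi)\mathbf H^{(0)}$, use $\mathcal P_{\mathcal G}\Pi=\Pi$, and get contraction by $\rho^l$ on the invariant complement. Your explicit remark that self-adjointness makes the operator norm on $\ker\Pi$ equal to $\rho$ is what the paper's appeal to ``operator submultiplicativity'' tacitly relies on, and your extra max-norm bound $|\lambda|\le 1$ is harmless but not needed given the assumed $\rho<1$.
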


\noindent\textit{Proof Sketch.}
For each feature coordinate, symmetry of the potential and
reciprocity of the rotor transport induce a self-adjoint normalized
connection averaging operator in the corresponding degree-weighted
inner product.
Their direct sum therefore defines a self-adjoint block-diagonal
operator $\mathcal P_{\mathcal G}$ on $\mathscr H$.
Decomposing $\mathbf H^{(0)}$ into the unit-eigenspace and its
orthogonal complement yields Eq.~(\ref{eq: theorem2}).
The statement is conditional on fixed reciprocal transports and on
$\rho<1$ on the complementary invariant subspace; it does not claim
an unconditional contraction rate for arbitrary time-varying
connections.
Supplementary Appendix B provides the complete derivation.

\subsection{Adaptive Holographic Aggregation}
\label{sec: Adaptive Holographic Aggregation}

\textbf{Motivation.}
    To unleash the representation potential of geometrically propagated features (i.e., aligned tokens), it is imperative to reveal their intricate dependencies.
    Unfortunately, prevalent message aggregation (e.g., simple concatenation or mean pooling) indiscriminately compresses modality-aware components and ignores receptive fields, leading to poor generalizability and performance degradation.
    We clarify that CGP maintains the propagated node states in the Grade-1 subspace because rotor sandwich transport preserves vector grade.
    The Grade-0 and Grade-2 quantities in Eq.~(\ref{eq: component_mapping}) are therefore edgewise interaction statistics that parameterize the geometric potential and spatial rotor, rather than additional propagated state channels.
    Consequently, the propagated Grade-1 components encode modality-specific states that have already been modulated by topology-aware intra- and inter-modality interactions during CGP.
    Based on this observation, we propose Adaptive Holographic Aggregation (AHA), which adaptively filters these propagated Clifford components according to their energy and subsequently reconciles their representations across propagation scales.

\textbf{Energy-aware Component Filtering.}
    Since $\mathbf H_u^{(0)}$ lies in the Grade-1 subspace and rotor sandwich transport preserves grade, the normalized CGP update in Eq.~(\ref{eq: Training-free Geometric Propagation}) satisfies
    \begin{equation}
    \label{eq: grade1_closure}
        \mathbf H_u^{(l)}
        =
        \sum_{k=1}^{K}\mathbf H_{u,k}^{(l)}e_k
        \in \mathcal{C}l_K^{(1)},
         l=0,\ldots,L.
    \end{equation}
    Here $\mathbf H_{u,k}^{(l)}\in\mathbb R^d$ denotes the propagated component associated with modality axis $e_k$.
    Although the propagated state remains Grade-1, each component has been transformed by the geometric potential and rotor constructed from the Grade-0 and Grade-2 interaction statistics in Eq.~(\ref{eq: component_mapping}).

    We quantify the information density of the active Clifford components by
    \begin{equation}
    \label{eq: energy_energy}
        \mathbb E_u^{(l)}
        =
        \left[
        \|\mathbf H_{u,1}^{(l)}\|_2^2,
        \ldots,
        \|\mathbf H_{u,K}^{(l)}\|_2^2
        \right]^{\top}
        \in\mathbb R^{K}.
    \end{equation}
    A learnable gate then adaptively modulates these propagated modality components:
    \begin{equation}
    \label{eq: energy_gating}
    \begin{aligned}
        \boldsymbol\alpha_u^{(l)}
        &=
        \sigma\!\left(
        \mathbf W_{\mathcal G}
        \operatorname{Norm}(\mathbb E_u^{(l)})
        +
        \mathbf b_{\mathcal G}
        \right),
        \widetilde{\mathbf H}_u^{(l)}
        =
        \sum_{k=1}^{K}
        \alpha_{u,k}^{(l)}
        \mathbf H_{u,k}^{(l)}e_k ,
    \end{aligned}
    \end{equation}
    where $\boldsymbol\alpha_u^{(l)}\in(0,1)^K$.
    The soft gate attenuates less informative propagated components rather than removing them deterministically.
    Importantly, AHA does not identify the $K$ propagated modality axes with the $K^2$ ordered modality interactions.
    Instead, cross-modality interactions affect these propagated states through the scalar and bi-vector statistics used to construct CGP, while AHA determines how the resulting topology-aware modality components contribute to downstream fusion.

\textbf{Scale-aware Resonance Fusion.}
    After obtaining the energy-filtered topology-aware modality components, their validity varies across topology scales corresponding to the propagation depth $L$ within each channel.
    To reconcile these receptive fields and facilitate fusion, we first generate $\mathbf{H}_u^{\text{ctx}}$ by learnable $\mathbf{W}_{\tau}$ to capture the consensus profile across all scales:
\begin{equation}
\label{eq: scale_weight}
\begin{aligned}
    \mathbf{H}_u^{\text{ctx}}\in\mathbb{R}^{d\times2^K}\coloneqq \text{Norm} \left(   \sum_{l=0}^{L} \left[\mathbf{W}_{\tau}^{(l)}\tilde{\mathbf{H}}_u^{(l)}+ \mathbf{b}_{\tau}^{(l)}\right] \right).
\end{aligned}
\end{equation}
    Subsequently, we employ a resonance attention mechanism to compute the scale validity score $\beta_{u,l}$, where a learnable attention score $\mathbf{a}_S^\top$ projects the interaction between the current representation and consensus profile into a scalar space.
    Based on this, AHA aggregates the grade-filtered and scale-weighted tokens, and the $d_f$-dimension fusion representation $\mathbf{Z}_u$ is obtained by projecting it back into Euclidean space via a Clifford linear layer parameterized by $\mathbf{W}_{\text{out}}$:
\begin{equation}
\label{eq: scale_aggregation}
\begin{aligned}
    &\;\;\;\mathbf{Z}_u\in\mathbb{R}^{d_f}\coloneqq \mathbf{W}_{\text{out}} \left( \sum_{l=0}^{L} \beta_{u,l} \cdot \tilde{\mathbf{H}}_u^{(l)} \right) + \mathbf{b}_{\text{out}},\\
    &\beta_{u,l} = \frac{\exp(\mathbf{a}_S^\top \cdot \tanh(\mathbf{W}_S [\mathbf{H}_u^{\text{ctx}} \| \tilde{\mathbf{H}}_u^{(l)}]))}{\sum_{k=0}^{L} \exp(\mathbf{a}_S^\top \cdot \tanh(\mathbf{W}_S [\mathbf{H}_u^{\text{ctx}} \| \tilde{\mathbf{H}}_u^{(k)}]))}.
\end{aligned}
\end{equation}

\textbf{Theoretical Analysis.}
    We characterize the sensitivity introduced by AHA without assuming that the consensus profile is an unknown downstream optimum. Let $f_{\mathrm{cll}}$ denote the affine Clifford projection in Eq.~(\ref{eq: scale_aggregation}), whose linear part has operator norm at most $\omega$, and let $\mathbf Z_u^{\mathrm{raw}}=f_{\mathrm{cll}}(\sum_l\beta_{u,l}\mathbf H_u^{(l)})$.
    \begin{theorem}
    \label{theorem3}
    \textbf{(AHA Stability and Consensus Bound).}
    Since $\beta_{u,l}\ge0$ and $\sum_l\beta_{u,l}=1$,
    \begin{equation}
    \label{eq: theorem3}
    \begin{aligned}
    \|\mathbf Z_u-\mathbf Z_u^{\mathrm{raw}}\|_2
    &\le\omega\sum_{l=0}^{L}\beta_{u,l}\|(\mathbf1-\boldsymbol\alpha_u^{(l)})\odot\mathbf H_u^{(l)}\|_{\mathcal Cl},\\
    \|\mathbf Z_u-f_{\mathrm{cll}}(\mathbf H_u^{\mathrm{ctx}})\|_2
    &\le\omega\sum_{l=0}^{L}\beta_{u,l}\|\tilde{\mathbf H}_u^{(l)}-\mathbf H_u^{\mathrm{ctx}}\|_{\mathcal Cl}.
    \end{aligned}
    \end{equation}
    \end{theorem}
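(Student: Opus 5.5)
Both inequalities follow the same pattern: write each quantity as the affine map $f_{\mathrm{cll}}$ applied to a Clifford-valued argument, let the bias cancel in the difference, and bound the remaining linear part by its operator norm $\omega$. Once the argument difference is expressed as a convex combination over scales $l$, the triangle inequality and $\beta_{u,l}\ge0$ give the stated sums. The same weights $\beta_{u,l}$ must appear in both $\mathbf Z_u$ and $\mathbf Z_u^{\mathrm{raw}}$. I will read the definition of $\mathbf Z_u^{\mathrm{raw}}$ as reusing the attention weights computed in Eq.~(\ref{eq: scale_aggregation}), not recomputing them from the ungated states. Otherwise the first bound would need an extra softmax-Lipschitz term.

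\emph{First inequality.} Write $f_{\mathrm{cll}}(\mathbf Y)=\mathbf W_{\mathrm{out}}\mathbf Y+\mathbf b_{\mathrm{out}}$. Then
\begin{equation*}
\mathbf Z_u-\mathbf Z_u^{\mathrm{raw}}=\mathbf W_{\mathrm{out}}\Big(\sum_{l=0}^{L}\beta_{u,l}\big(\tilde{\mathbf H}_u^{(l)}-\mathbf H_u^{(l)}\big)\Big).
\end{equation*}
By Eq.~(\ref{eq: energy_gating}) and the Grade-1 closure in Eq.~(\ref{eq: grade1_closure}), $\mathbf H_u^{(l)}-\tilde{\mathbf H}_u^{(l)}=\sum_k(1-\alpha_{u,k}^{(l)})\mathbf H_{u,k}^{(l)}e_k$. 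This is exactly what $(\mathbf 1-\boldsymbol\alpha_u^{(l)})\odot\mathbf H_u^{(l)}$ means, with the gate broadcast over the $e_k$ components. Applying $\|\mathbf W_{\mathrm{out}}\mathbf Y\|_2\le\omega\|\mathbf Y\|_{\mathcal Cl}$, then the triangle inequality for $\|\cdot\|_{\mathcal Cl}$, and then homogeneity with $\beta_{u,l}\ge0$ gives the first line of Eq.~(\ref{eq: theorem3}).

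\emph{Second inequality.} Since $\sum_l\beta_{u,l}=1$, I can write $\mathbf H_u^{\mathrm{ctx}}=\sum_l\beta_{u,l}\mathbf H_u^{\mathrm{ctx}}$. This is where normalization of the weights is used: it lets the bias $\mathbf b_{\mathrm{out}}$ cancel and the difference become a convex combination. Then
\begin{equation*}
\mathbf Z_u-f_{\mathrm{cll}}(\mathbf H_u^{\mathrm{ctx}})=\mathbf W_{\mathrm{out}}\Big(\sum_{l=0}^{L}\beta_{u,l}\big(\tilde{\mathbf H}_u^{(l)}-\mathbf H_u^{\mathrm{ctx}}\big)\Big),
\end{equation*}
and the same operator-norm and triangle-inequality steps finish the argument.

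The main obstacle is bookkeeping rather than analysis. I must fix that $\omega$ is the operator norm from $(\mathscr H,\|\cdot\|_{\mathcal Cl})$ to $(\mathbb R^{d_f},\|\cdot\|_2)$, and that $\mathbf H_u^{\mathrm{ctx}}$ lies in the same space as the $\tilde{\mathbf H}_u^{(l)}$. The paper states $\mathbf H_u^{\mathrm{ctx}}\in\mathbb R^{d\times2^K}$, so I will treat every state as embedded in the full $\mathcal Cl_K$-valued space and take $\omega$ on that space. With this convention no further estimates are needed, and neither Theorem~\ref{theorem1} nor Theorem~\ref{theorem2} is required.
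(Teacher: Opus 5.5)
Your proposal is correct and follows the same route as the paper's proof in Appendix C: reuse the same weights $\beta_{u,l}$ in $\mathbf Z_u^{\mathrm{raw}}$, cancel the bias of $f_{\mathrm{cll}}$, bound the result by $\omega$ using the operator norm of $\mathbf W_{\mathrm{out}}$, and apply the triangle inequality with $\beta_{u,l}\ge0$; for the second bound, use $\sum_l\beta_{u,l}=1$ to move $\mathbf H_u^{\mathrm{ctx}}$ inside the convex sum. One small correction: the bias cancels in any difference of two $f_{\mathrm{cll}}$ outputs, so normalizing the weights is needed only to move $\mathbf H_u^{\mathrm{ctx}}$ inside the sum, not to cancel the bias.
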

    These inequalities separately bound the representation change introduced by grade gating and the deviation from the learned multiscale consensus. They do not assume that suppressed components are noise or that $\mathbf H_u^{\mathrm{ctx}}$ is the downstream-optimal representation. Both bounds follow from the linear part of the affine projection,
the convex scale weights, and the triangle inequality;
Supplementary Appendix C provides the complete proof.

\begin{algorithm}[t]
\caption{LION: c\underline{LI}ff\underline{O}rd \underline{N}eural Paradigm for Multimodal-Attributed Graphs}
\label{alg: lion}
\begin{algorithmic}[1]
\REQUIRE Multimodal-Attributed Graph $\mathcal{G}=(\mathcal{V}, \mathcal{E}, \{\mathbf{X}^{(m)}\}_{m\in\mathcal{M}})$, Model Layers $L$ (Graph Propagation Depth).
\ENSURE Learned Node- or Modality-level Representations $\mathbf{Z}$.

\STATE \textcolor{blue}{{/* Step 1: Clifford Geometric Propagation for Modality Alignment}}
\STATE \textcolor{blue}{// CGP obtains graph propagated features, which are equivalent to generating aligned tokens by modality alignment.}
\STATE Lift Euclidean modality-attributed features $\mathbf{X}$ into the Clifford manifold to obtain $\mathbf{H}^{(0)}$ via Eq.~(\ref{eq: modality-oriented Clifford initialization}).
\STATE Capture intricate semantic curvature to instantiate $\mathcal{A_G}\coloneqq\{\Phi,\mathcal{R}\}$ and then facilitate multimodal interactions via Eq.~(\ref{eq: topology-oriented Clifford initialization}). 
\STATE Perform modality-aware, curvature-adaptive high-order graph propagation to obtain $\{\mathbf{H}^{(1)},\dots,\mathbf{H}^{(L)}\}$ via Eq.~(\ref{eq: Training-free Geometric Propagation}).
\STATE \textcolor{blue}{{/* Step 2: Adaptive Holographic Aggregation for Modality Fusion}}
\STATE \textcolor{blue}{// AHA follows CGP to obtain node- or modality-level outputs for specific tasks, which is equivalent to modality fusion.}

\STATE Quantify the information density of modality interaction channels within the propagated features via Eq.~(\ref{eq: energy_energy}).
\STATE Dynamically modulate these channels based on the computed information density and a learnable gate via Eq.~(\ref{eq: energy_gating}).
\STATE \textcolor{blue}{// This module facilitates adaptive integration of modality interaction channels by energy-based information density.}
\STATE Capture the consensus profile across propagation depths via Eq.~(\ref{eq: scale_weight}). 
\STATE Dynamically modulate the contribution of each depth based on the consensus profile via Eq.~(\ref{eq: scale_aggregation}). 
\STATE \textcolor{blue}{// This module facilitates the adaptive integration of multi-scale receptive fields by semantic consensus.}
\STATE \textbf{Return} The task-adaptive fusion representation in Euclidean space.
\end{algorithmic}
\end{algorithm}

\noindent\textit{Proof Sketch.}
    The first inequality compares the gated and ungated convex scale aggregates and applies the operator-norm bound of $f_{\mathrm{cll}}$. The second inserts the learned consensus profile inside the same convex aggregate and applies the triangle inequality. No assumption is made about whether an attenuated component is signal or noise. The complete proof is provided in Supplementary Appendix C.

\begin{table*}[t]
\setlength{\abovecaptionskip}{-0.01cm}
\centering
\caption{The statistical information of the experimental datasets. The ``Tasks'' column summarizes applicable task types; evaluated dataset-task pairs are specified by the result-table headers.}
\label{table: datasets}
\resizebox{\linewidth}{!}{
\begin{tabular}{l|cccc|ll}
\midrule[0.3pt]
Datasets     & \# Modalities & \# Nodes & \# Edges   & \# Classes & Tasks                        & Description    \\ \midrule[0.3pt]
RedditS      & Text, Image  & 15,894  & 566,160   & 20              & Graph (Node, Link)           & Social Network \\
Movies       & Text, Image  & 16,672  & 218,390   & 20              & Graph (Node, Link)           & Movie Network  \\
Grocery      & Text, Image  & 17,074  & 171,340   & 20              & Graph (Node, Link), Modality & Recommendation \\
SemArt       & Text, Image  & 21,382  & 1,216,432 & -               & Graph (Link), Modality       & Art Network    \\
Flickr30k    & Text, Image  & 31,783  & 181,151   & -               & Graph (Link), Modality       & Image Network  \\
Sports       & Text, Image  & 50,250  & 356,202   & -               & Graph (Link), Modality       & Recommendation \\
Ele-fashion  & Text, Image  & 97,766  & 199,602   & 12              & Graph (Node, Link), Modality & Recommendation \\
Cloth        & Text, Image  & 125,839 & 951,271   & -               & Graph (Link), Modality       & Recommendation \\
Goodreads & Text, Image  & 685,294 & 7,235,048 & 11              & Graph (Node, Link)           & Book Network   \\ \midrule[0.3pt]
\end{tabular}}
\end{table*}

\subsection{Algorithm and Complexity Analysis}
\label{sec: Algorithm and Complexity Analysis}

\textbf{Algorithm Details}.
    For a more comprehensive presentation, we provide the complete LION in Algorithm~\ref{alg: lion}.
    Specifically, our framework transforms raw multimodal inputs into a unified representation space via two decoupled phases: Clifford Geometric Propagation (CGP) for modality alignment and Adaptive Holographic Aggregation (AHA) for modality fusion.
    CGP first lifts raw modalities into the Clifford manifold, constructs the geometric potential and spatial rotor from topology-aware interactions, and performs training-free high-order propagation to generate aligned multiscale tokens.
    AHA then filters topology-aware modality components by energy and fuses
propagation depths by resonance with a consensus profile.
    This design makes the expensive topology-aware geometric computation independent of iterative parameter learning, while retaining task-adaptive fusion in the trainable stage.
    More specifically, the Clifford lifting step assigns each modality to an orthogonal basis vector, which preserves modality identity while enabling explicit geometric products between modalities.
    The scalar and bi-vector components of the edgewise geometric products instantiate the intra- and inter-modality interaction statistics that parameterize CGP.
    After CGP, AHA operates on the resulting topology-aware Clifford token components and propagation depths to decide which components and receptive fields should dominate the final representation.
    In this sense, Algorithm~\ref{alg: lion} separates representation construction from task-specific optimization while keeping both stages connected through the same Clifford token space.
    The propagated tokens can be shared by node-, edge-, and modality-level heads, which makes the framework naturally adaptable to heterogeneous MAG tasks.
    Detailed algorithmic interpretation and complexity derivations are provided in Supplementary Appendix E.

\textbf{Theoretical Time-Space Complexity.} 
    Let $N=|\mathcal{V}|$, $M=|\mathcal{E}|$, $L$ be the propagation depth, and $D=2^K d$ be the Clifford multi-vector dimension.
    The one-time CGP pre-processing costs $\mathcal{O}(M\cdot 2^K\cdot D + L\cdot M\cdot D)$, while the trainable AHA phase costs $\mathcal{O}(L\cdot N\cdot D)$ per epoch.
    The cached multiscale representations require $\mathcal{O}(L\cdot N\cdot D)$ memory, and the learnable projection/gating parameters occupy $\mathcal{O}(D^2)$ space.
    Since CGP is parameter-free and cached offline, LION avoids recursive neighborhood expansion during training and remains asymptotically linear in graph size for the practical MAG settings considered here.
    This decoupled computation is particularly useful for large MAGs because propagation results can be reused across epochs and downstream heads.
    The trainable AHA module therefore operates on fixed multiscale Clifford tokens, which substantially reduces repeated graph traversal while preserving the ability to learn task-specific fusion weights.
    The memory overhead is dominated by cached propagated features, but this cost is predictable and can be controlled by the propagation depth $L$ and latent dimension $d$.
    This trade-off favors stable and efficient training when the same graph is evaluated across multiple downstream tasks.

\begin{table*}[t]
\setlength{\abovecaptionskip}{-0.01cm}
\centering
\caption{Performance comparison on graph downstream tasks.
The best/second result is \textbf{bold}/\underline{underline}.}
\label{tab: graph downstream performance comparison}
\footnotesize 
\renewcommand{\arraystretch}{1.2}
\resizebox{\linewidth}{!}{
\setlength{\tabcolsep}{1.5mm}{
\begin{tabular}{l!{\vrule width 0.1pt}cc!{\vrule width 0.1pt}cc!{\vrule width 0.1pt}cc!{\vrule width 0.1pt}cc!{\vrule width 0.1pt}cc!{\vrule width 0.1pt}cc}
\hline\thickhline
\rowcolor{gray!80}
\multicolumn{1}{c!{\vrule width 0.1pt}}{\textbf{\textcolor{white}{Tasks}}} & \multicolumn{4}{c!{\vrule width 0.1pt}}{\textbf{\textcolor{white}{Node Classification}}} & \multicolumn{4}{c!{\vrule width 0.1pt}}{\textbf{\textcolor{white}{Link Prediction}}} & \multicolumn{4}{c}{\textbf{\textcolor{white}{Node Clustering}}} \\
\hline
\rowcolor{gray!20}
 & \multicolumn{2}{c!{\vrule width 0.1pt}}{\textbf{Movies}} & \multicolumn{2}{c!{\vrule width 0.1pt}}{\textbf{Goodreads}} & \multicolumn{2}{c!{\vrule width 0.1pt}}{\textbf{Sports}} & \multicolumn{2}{c!{\vrule width 0.1pt}}{\textbf{Cloth}} & \multicolumn{2}{c!{\vrule width 0.1pt}}{\textbf{RedditS}} & \multicolumn{2}{c}{\textbf{Grocery}} \\
\cline{2-13}
\rowcolor{gray!20}
\multirow{-2}{*}{\diagbox[width=8.5em,height=2.4em]{\textbf{Methods}}{\textbf{Datasets}}} 
 & Acc & F1-Score & Acc & F1-Score & MRR & Hits@3 & MRR & Hits@3 & NMI & ARI & NMI & ARI \\
\hline
GCN & 39.12$_{ \pm 1.33}$ & 36.17$_{ \pm 1.41}$ & 56.03$_{ \pm 1.52}$ & 50.14$_{ \pm 1.21}$ & 46.23$_{ \pm 0.28}$ & 58.09$_{ \pm 0.73}$ & 40.52$_{ \pm 0.37}$ & 48.15$_{ \pm 0.44}$ & 62.18$_{ \pm 0.62}$ & 64.07$_{ \pm 0.51}$ & 38.13$_{ \pm 0.43}$ & 29.54$_{ \pm 0.38}$ \\
\rowcolor{gray!10} MMGCN & 44.87$_{ \pm 1.42}$ & 39.92$_{ \pm 1.37}$ & 58.54$_{ \pm 0.87}$ & 52.36$_{ \pm 1.53}$ & 48.44$_{ \pm 0.34}$ & 60.52$_{ \pm 0.58}$ & 42.83$_{ \pm 0.41}$ & 50.94$_{ \pm 0.56}$ & 72.10$_{ \pm 0.55}$ & 70.08$_{ \pm 0.62}$ & 45.84$_{ \pm 0.49}$ & 32.26$_{ \pm 0.41}$ \\
GAT & 40.25$_{ \pm 1.48}$ & 36.43$_{ \pm 1.52}$ & 57.12$_{ \pm 1.08}$ & 51.08$_{ \pm 1.24}$ & 47.19$_{ \pm 0.43}$ & 59.14$_{ \pm 0.77}$ & 41.26$_{ \pm 0.49}$ & 49.34$_{ \pm 0.58}$ & 65.45$_{ \pm 0.68}$ & 66.28$_{ \pm 0.63}$ & 39.42$_{ \pm 0.45}$ & 28.75$_{ \pm 0.47}$ \\
\rowcolor{gray!10} MGAT & 43.14$_{ \pm 1.37}$ & 38.09$_{ \pm 1.44}$ & 59.88$_{ \pm 0.91}$ & 53.75$_{ \pm 1.38}$ & 49.82$_{ \pm 0.36}$ & 62.03$_{ \pm 0.54}$ & 43.91$_{ \pm 0.45}$ & 52.54$_{ \pm 0.51}$ & 73.33$_{ \pm 0.53}$ & 69.14$_{ \pm 0.59}$ & 43.08$_{ \pm 0.43}$ & 31.15$_{ \pm 0.46}$ \\
\hline
MLaGA & 48.37$_{ \pm 1.02}$ & 42.08$_{ \pm 1.64}$ & 66.42$_{ \pm 1.25}$ & 60.29$_{ \pm 1.13}$ & 54.33$_{ \pm 0.62}$ & 68.41$_{ \pm 0.88}$ & 49.12$_{ \pm 0.73}$ & 58.37$_{ \pm 0.80}$ & 82.58$_{ \pm 1.15}$ & 82.14$_{ \pm 1.12}$ & 51.92$_{ \pm 0.83}$ & 37.58$_{ \pm 0.89}$ \\
\rowcolor{gray!10} GraphGPT-O & 49.12$_{ \pm 1.90}$ & 42.94$_{ \pm 1.13}$ & 64.25$_{ \pm 0.92}$ & 58.76$_{ \pm 1.30}$ & 55.82$_{ \pm 0.68}$ & 69.17$_{ \pm 0.75}$ & 51.43$_{ \pm 0.66}$ & 60.18$_{ \pm 0.73}$ & 79.33$_{ \pm 1.02}$ & 80.41$_{ \pm 0.96}$ & 50.64$_{ \pm 0.69}$ & 38.83$_{ \pm 0.62}$ \\
Graph4MM & 49.76$_{ \pm 1.85}$ & 43.22$_{ \pm 1.95}$ & 67.18$_{ \pm 0.78}$ & 61.35$_{ \pm 0.83}$ & 53.94$_{ \pm 0.71}$ & 67.58$_{ \pm 0.83}$ & 50.84$_{ \pm 0.68}$ & 59.43$_{ \pm 0.63}$ & 84.14$_{ \pm 0.92}$ & 83.25$_{ \pm 0.90}$ & 51.20$_{ \pm 0.78}$ & 38.92$_{ \pm 0.75}$ \\
\rowcolor{gray!10} InstructG2I & 50.57$_{ \pm 1.74}$ & 44.08$_{ \pm 1.38}$ & 65.84$_{ \pm 0.90}$ & 59.93$_{ \pm 1.02}$ & 56.75$_{ \pm 0.83}$ & 70.12$_{ \pm 0.93}$ & 52.28$_{ \pm 0.80}$ & 61.54$_{ \pm 0.86}$ & 82.83$_{ \pm 1.09}$ & 81.58$_{ \pm 1.05}$ & 49.87$_{ \pm 0.92}$ & 35.28$_{ \pm 0.82}$ \\
\hline
DMGC & 51.73$_{ \pm 1.95}$ & 46.84$_{ \pm 1.59}$ & 63.42$_{ \pm 1.27}$ & 61.18$_{ \pm 1.63}$ & 56.47$_{ \pm 0.54}$ & 69.83$_{ \pm 0.49}$ & 51.87$_{ \pm 0.61}$ & 60.84$_{ \pm 0.57}$ & \underline{89.62}$_{ \pm 0.73}$ & \underline{89.14}$_{ \pm 0.80}$ & \underline{56.13}$_{ \pm 0.66}$ & 41.82$_{ \pm 0.60}$ \\
\rowcolor{gray!10} DGF & 52.94$_{ \pm 1.64}$ & 45.13$_{ \pm 1.70}$ & 66.83$_{ \pm 1.61}$ & 62.54$_{ \pm 1.12}$ & 55.93$_{ \pm 0.51}$ & 69.17$_{ \pm 0.54}$ & 51.24$_{ \pm 0.63}$ & 60.13$_{ \pm 0.59}$ & 88.87$_{ \pm 0.70}$ & 88.42$_{ \pm 0.76}$ & {55.88}$_{ \pm 0.63}$ & \underline{42.25}$_{ \pm 0.68}$ \\
MIG-GT & 53.17$_{ \pm 2.02}$ & \underline{47.60}$_{ \pm 1.89}$ & {68.24}$_{ \pm 0.90}$ & {64.87}$_{ \pm 1.24}$ & 59.13$_{ \pm 0.47}$ & 71.94$_{ \pm 0.44}$ & {55.12}$_{ \pm 0.56}$ & \underline{63.24}$_{ \pm 0.53}$ & 86.25$_{ \pm 0.82}$ & 84.89$_{ \pm 0.88}$ & 51.62$_{ \pm 0.70}$ & 38.18$_{ \pm 0.66}$ \\
\rowcolor{gray!10} NTSFormer & \underline{53.89}$_{ \pm 2.16}$ & {46.94}$_{ \pm 1.93}$ & 71.19$_{ \pm 0.93}$ & \underline{65.80}$_{ \pm 0.80}$ & \underline{60.52}$_{ \pm 0.44}$ & \underline{72.28}$_{ \pm 0.41}$ & 54.83$_{ \pm 0.53}$ & 62.94$_{ \pm 0.50}$ & 85.81$_{ \pm 0.78}$ & 84.14$_{ \pm 0.83}$ & 52.33$_{ \pm 0.68}$ & 37.81$_{ \pm 0.63}$ \\
UniGraph2 & 52.92$_{ \pm 1.79}$ & 46.28$_{ \pm 1.66}$ & \underline{72.82}$_{ \pm 0.82}$ & {65.32}$_{ \pm 0.96}$ & 59.84$_{ \pm 0.49}$ & 71.26$_{ \pm 0.47}$ & \underline{55.31}$_{ \pm 0.54}$ & 63.18$_{ \pm 0.52}$ & 86.56$_{ \pm 0.68}$ & 85.24$_{ \pm 0.73}$ & 54.42$_{ \pm 0.64}$ & 40.86$_{ \pm 0.62}$ \\
\rowcolor{gray!10} {LION (Ours)} & \textbf{58.61}$_{ \pm 1.28}$ & \textbf{51.73}$_{ \pm 1.75}$ & \textbf{78.54}$_{ \pm 0.70}$ & \textbf{68.91}$_{ \pm 1.13}$ & \textbf{62.31}$_{ \pm 0.55}$ & \textbf{73.87}$_{ \pm 0.39}$ & \textbf{58.47}$_{ \pm 0.85}$ & \textbf{66.58}$_{ \pm 0.48}$ & \textbf{90.53}$_{ \pm 0.89}$ & \textbf{90.17}$_{ \pm 1.08}$ & \textbf{58.54}$_{ \pm 0.72}$ & \textbf{46.12}$_{ \pm 0.89}$ \\
\hline\thickhline
\end{tabular}
}}
\end{table*}

\begin{table}[t]
\setlength{\abovecaptionskip}{-0.01cm}
\centering
\caption{Performance comparison on modality-level tasks.}
\label{tab: multimodal downstream performance comparison}
\footnotesize 
\renewcommand{\arraystretch}{1.2}
\resizebox{\linewidth}{!}{
\setlength{\tabcolsep}{1.2mm}{
\begin{tabular}{l!{\vrule width 0.1pt}cc!{\vrule width 0.1pt}cc!{\vrule width 0.1pt}cc}
\hline\thickhline
\rowcolor{gray!80}
\multicolumn{1}{c!{\vrule width 0.1pt}}{\textbf{\textcolor{white}{Tasks}}} & \multicolumn{2}{c!{\vrule width 0.1pt}}{\textbf{\textcolor{white}{Modal Retrieval}}} & \multicolumn{2}{c!{\vrule width 0.1pt}}{\textbf{\textcolor{white}{G2Text}}} & \multicolumn{2}{c}{\textbf{\textcolor{white}{G2Image}}} \\
\hline

\rowcolor{gray!20}
 & \multicolumn{2}{c!{\vrule width 0.1pt}}{\textbf{Ele-fashion}} & \multicolumn{2}{c!{\vrule width 0.1pt}}{\textbf{Flickr30k}} & \multicolumn{2}{c}{\textbf{SemArt}} \\
\cline{2-7}

\rowcolor{gray!20}
\multirow{-2}{*}{\diagbox[width=8em,height=2.4em]{\textbf{Methods}}{\textbf{Datasets}}} 
 & \;MRR\; & Hits@3
 & BLEU-4 & \;CIDEr\;
 & \;CLIP-S\; & DINOv2-S \\
\hline

GCNII & 77.24 & 70.15 & 5.42 & 39.54 & 50.52 & 35.81 \\
\rowcolor{gray!10} MMGCN & 81.85 & 74.58 & 6.15 & 44.82 & 54.88 & 39.87 \\
GATv2 & 78.53 & 71.32 & 5.56 & 40.24 & 51.25 & 36.54 \\
\rowcolor{gray!10} MGAT & 82.41 & 75.26 & 6.28 & 45.56 & 55.43 & 40.52 \\
\hline
MLaGA & 87.65 & 79.85 & 9.54 & 71.58 & 68.52 & 53.15 \\
\rowcolor{gray!10}  GraphGPT-O & 88.45 & 80.50 & 9.89 & 72.58 & \underline{70.84} & 54.29 \\
Graph4MM & 86.25 & 78.42 & \underline{10.42} & \underline{74.82} & 67.21 & 52.56 \\
\rowcolor{gray!10}  InstructG2I & 89.12 & 81.24 & 9.68 & 71.98 & 69.92 & \underline{55.43} \\
\hline
DMGC & 91.20 & 83.12 & 7.83 & 61.82 & 61.54 & 47.26 \\
\rowcolor{gray!10}  DGF & 91.55 & 83.47 & 7.92 & 62.20 & 61.98 & 47.68 \\
MIG-GT & 92.54 & 84.51 & 8.15 & 63.57 & 63.82 & 48.94 \\
\rowcolor{gray!10}  NTSFormer & 92.88 & 84.93 & 8.24 & 64.12 & 63.26 & 48.56 \\
Unigraph2 & \underline{93.13} & \underline{85.45} & 8.56 & 65.28 & 64.56 & 49.83 \\
\rowcolor{gray!10} {LION (Ours)} & \textbf{94.67} & \textbf{86.92} & \textbf{11.54} & \textbf{78.92} & \textbf{74.21} & \textbf{58.30} \\
\hline\thickhline
\end{tabular}
}}
\end{table}

\section{Experiments}
\label{sec: experiments}
    In this section, we first introduce the experimental setup, and additional details for reproducibility are provided in the supplementary material.
    Then, we provide a comprehensive evaluation to address the following questions:
    \textbf{Q1 (Effectiveness)}. 
    How does LION perform as a new neural paradigm for MAG?
    \textbf{Q2 (Ablation)}. 
    If LION is effective, what contributes to its performance? 
    \textbf{Q3 (Interpretability)}. 
    In depth, how do these components exert their influence?
    \textbf{Q4 (Robustness)}. 
    How robust is LION when dealing with sparse scenarios?
    \textbf{Q5 (Scalability)}. 
    What are the time overhead, space overhead, and convergence efficiency of LION?
    
\subsection{Experimental Setup}

\textbf{Datasets}.
    In our experiments, we evaluate LION across 9 publicly available MAG datasets from 6 domains, achieving a comprehensive validation. 
    Specifically, these datasets include social network (RedditS)~\cite{desai1_RedditS}, movie network (Movies)~\cite{ni2019_Grocery_Cloth_Ele_Movies_Sports}, 4 recommendation networks (Grocery, Sports, Ele-fashion, Cloth)~\cite{ni2019_Grocery_Cloth_Ele_Movies_Sports,hou2024_Cloth_Ele_Sports}, art network (SemArt)~\cite{garcia2018_SemArt}, image network (Flickr30k)~\cite{plummer2015_Flickr30k}, and book network (Goodreads)~\cite{wan2018_Goodreads_NC,wan2019_Goodreads_NC}. 
    The dataset statistics are summarized in Table~\ref{table: datasets}.
    These datasets cover social, movie, recommendation, art, image, and book networks, supporting both graph-centric and modality-oriented tasks.
    Their graph scales range from medium-sized image-text and product graphs to the large Goodreads network with more than 0.68M nodes and 7.23M edges, which allows us to evaluate both representation quality and scalability.
    Their task coverage also spans supervised, unsupervised, retrieval, and generation settings, preventing the evaluation from being biased toward a single form of multimodal supervision.
    Detailed descriptions of each dataset are provided in Supplementary Appendix F.

\textbf{Baselines}.
    To achieve a comprehensive comparison, we utilize 
    (i) Single-modality GNN: GCN, GAT, GCNII, GATv2;
    (ii) Simple MAGNN: MMGCN~\cite{hu2021mmgcn}, MGAT~\cite{tao2020mgat};
    (iii) Conventional MAGNN: MLaGA~\cite{fan2025mlaga}, GraphGPT-O~\cite{fang2025graphgpt_o}, Graph4MM~\cite{ning2025graph4mm}, InstructG2I~\cite{jin2024instructg2i};
    (iv) Graph-enhanced MAGNN: DMGC~\cite{guo2025DMGC}, DGF~\cite{zheng2025dgf}, MIG-GT~\cite{hu2025mig_gt}, NTSFormer~\cite{hu2025ntsformer}, UniGraph2~\cite{he2025unigraph2}. 
    For fair comparison, all baselines are evaluated under the same dataset splits and task protocols whenever their original implementations support the corresponding downstream task.
    Details are shown in Supplementary Appendix G.

\textbf{Evaluation Protocols}.
    Fundamentally, the research motivation for MAG is to enhance data quality for traditional graph tasks and expand the graph downstream task spectrum to improve practical utility. 
    Therefore, evaluation involves
    (i) Prevalent graph tasks: node classification, link prediction, node clustering;
    (ii) Novel graph-oriented modality tasks: modality retrieval, Graph-to-Text (G2Text), Graph-to-Image (G2Image). 
    Given the complexity of the evaluation pipelines, hyperparameter settings, and quantitative metrics, please refer to Supplementary Appendix H for comprehensive details. All nine benchmarks provide text and image attributes. Graph-task and ablation entries shown with ``$\pm$'' are mean$\pm$std over five independent runs; modality retrieval/generation entries without ``$\pm$'' follow their benchmark single-run protocols.

\textbf{Experiment Environment}.
    Experiments are conducted on a workstation equipped with Intel Xeon Scalable processors and NVIDIA RTX 6000 Ada Generation GPUs with 96GB of VRAM, supported by 256GB of system RAM.
    The computational environment utilizes CUDA 12.9, while software implementations are developed using Python 3.10.18 and PyTorch 2.8.

\subsection{Performance Comparison}
\label{sec: Performance Comparison}

\textbf{Graph Tasks}.
    To answer \textbf{Q1}, we first report performance comparison on graph tasks.
    As shown in Table~\ref{tab: graph downstream performance comparison}, LION consistently achieves superior performance across all datasets, tasks, and metrics.
    Specifically, in node classification, LION outperforms the leading NTSFormer and Unigraph2 by a significant average margin of 5.84\%.
    This success highlights the capacity of LION to capture topology and modality insights through the Clifford geometric manifold.
    The gains are consistent across both classification and link-level prediction, indicating that LION does not merely improve a single supervised objective but learns transferable graph representations.
    This is important for MAGs because downstream labels are often sparse, while relational and multimodal cues are available at different granularities.
    By aligning modality attributes through topology-aware geometric propagation, LION can exploit these cues before the task-specific training stage.
    
    Regarding the baselines, graph-enhanced methods like NTSFormer generally outperform conventional methods such as MLaGA, yet they remain inferior to LION primarily due to their reliance on suboptimal Euclidean spaces.
    Furthermore, most approaches fail to maintain consistent dominance across all three tasks and six metrics. 
    For instance, while clustering-specific DMGC and DGF exhibit competitiveness performance, they fail to generalize effectively to the node classification task.
    Meanwhile, there are currently no specialized models designed for the link-level task.
    These observations suggest that simply attaching graph modules to multimodal encoders is insufficient for a general MAG neural paradigm.
    A unified representation space is needed so that the same model can support node-, edge-, and cluster-level reasoning without redesigning task-specific interaction modules.

\textbf{Modality Tasks}.
    Table~\ref{tab: multimodal downstream performance comparison} further validates the superiority of LION in modality tasks, encompassing prevalent retrieval and generation, where it achieves SOTA performance across all evaluation metrics.
    A notable performance gain is observed on the SemArt dataset for the G2Image task, where LION outperforms the leading baselines, GraphGPT-O and InstructG2I, by 4.76\% in CLIP-S and 5.18\% in DINOv2-S. 
    These results demonstrate that our alignment-then-fusion MAG neural paradigm effectively preserves the semantic integrity of multimodal attributes.
    This strategy facilitates more coherent and accurate content generation compared to existing methods that treat modality and topology as separate spaces.
    Notably, modality-agnostic prevalent GNNs and early MMGNNs fail to achieve competitive performance due to their outdated model architecture designs.
    The retrieval and generation results further show that topology is useful not only for graph prediction but also for modality-level semantic grounding.
    In retrieval tasks, aligned Clifford tokens improve cross-modal matching by reducing the discrepancy between visual and textual neighborhoods.
    In generation tasks, topology-aware fusion supplies structured context that guides the decoder or diffusion backbone toward outputs that are semantically compatible with the graph.
    Therefore, LION turns graph structure into a reusable multimodal prior rather than treating it as an additional feature appended to the input.

\begin{table}[t]
\setlength{\abovecaptionskip}{-0.01cm}
\centering
\caption{Ablation Study.}
\label{tab: ablation_study}
\renewcommand{\arraystretch}{1.2}
\resizebox{\linewidth}{!}{
\setlength{\tabcolsep}{0.8mm}{
\begin{tabular}{l!{\vrule width 0.1pt}cc!{\vrule width 0.1pt}cc!{\vrule width 0.1pt}cc!{\vrule width 0.1pt}cc}
\hline\thickhline
\rowcolor{gray!80}
\multicolumn{1}{c!{\vrule width 0.1pt}}{\textbf{\textcolor{white}{Tasks}}} & \multicolumn{4}{c!{\vrule width 0.1pt}}{\textbf{\textcolor{white}{Node Classification}}} & \multicolumn{4}{c}{\textbf{\textcolor{white}{Link Prediction}}} \\
\hline
\rowcolor{gray!20}
 & \multicolumn{2}{c!{\vrule width 0.1pt}}{\textbf{RedditS}} & \multicolumn{2}{c!{\vrule width 0.1pt}}{\textbf{Grocery}} & \multicolumn{2}{c!{\vrule width 0.1pt}}{\textbf{Flickr30k}} & \multicolumn{2}{c}{\textbf{SemArt}} \\
\cline{2-9}
\rowcolor{gray!20}
\multirow{-2}{*}{\diagbox[width=8em,height=2.4em]{\textbf{Methods}}{\textbf{Datasets}}} 
 & Acc & F1 & Acc & F1 & MRR & Hits@3 & MRR & Hits@3 \\
\hline
LION (Ours) & \textbf{94.8$_{ \pm 0.4}$} & \textbf{90.6$_{ \pm 0.5}$} & \textbf{88.3$_{ \pm 0.7}$} & \textbf{78.1$_{ \pm 0.9}$} & \textbf{68.4$_{ \pm 0.5}$} & \textbf{76.5$_{ \pm 0.4}$} & \textbf{85.2$_{ \pm 0.3}$} & \textbf{89.6$_{ \pm 0.2}$} \\
\rowcolor{gray!10} w/o Rotor & 93.9$_{ \pm 0.3}$ & 89.4$_{ \pm 0.4}$ & 87.4$_{ \pm 0.6}$ & 77.6$_{ \pm 0.8}$ & 67.8$_{ \pm 0.5}$ & 76.0$_{ \pm 0.4}$ & 84.7$_{ \pm 0.3}$ & 89.1$_{ \pm 0.2}$ \\
w/o Potential & 93.4$_{ \pm 0.4}$ & 88.7$_{ \pm 0.5}$ & 86.6$_{ \pm 0.7}$ & 76.8$_{ \pm 0.8}$ & 67.2$_{ \pm 0.4}$ & 75.7$_{ \pm 0.5}$ & 84.3$_{ \pm 0.3}$ & 88.8$_{ \pm 0.2}$ \\
\rowcolor{gray!10} w/o Energy & 92.8$_{ \pm 0.5}$ & 88.3$_{ \pm 0.6}$ & 86.2$_{ \pm 0.8}$ & 75.9$_{ \pm 1.0}$ & 66.5$_{ \pm 0.6}$ & 74.5$_{ \pm 0.5}$ & 83.5$_{ \pm 0.4}$ & 87.5$_{ \pm 0.3}$ \\
w/o Consen. & 92.6$_{ \pm 0.4}$ & 88.5$_{ \pm 0.6}$ & 85.9$_{ \pm 0.7}$ & 76.6$_{ \pm 0.9}$ & 67.0$_{ \pm 0.5}$ & 74.8$_{ \pm 0.5}$ & 82.9$_{ \pm 0.4}$ & 87.9$_{ \pm 0.3}$ \\
\rowcolor{gray!10} w/o Scale & 91.7$_{ \pm 0.6}$ & 86.8$_{ \pm 0.7}$ & 84.5$_{ \pm 0.9}$ & 75.5$_{ \pm 1.1}$ & 66.8$_{ \pm 0.7}$ & 73.5$_{ \pm 0.6}$ & 82.5$_{ \pm 0.5}$ & 86.7$_{ \pm 0.4}$ \\

\hline\thickhline
\rowcolor{gray!80}
\multicolumn{1}{c!{\vrule width 0.1pt}}{\textbf{\textcolor{white}{Tasks}}} & \multicolumn{4}{c!{\vrule width 0.1pt}}{\textbf{\textcolor{white}{Node Clustering}}} & \multicolumn{4}{c}{\textbf{\textcolor{white}{Multimodal Retrieval}}} \\
\hline
\rowcolor{gray!20}
 & \multicolumn{2}{c!{\vrule width 0.1pt}}{\textbf{Movies}} & \multicolumn{2}{c!{\vrule width 0.1pt}}{\textbf{Ele-Fashion}} & \multicolumn{2}{c!{\vrule width 0.1pt}}{\textbf{Sports}} & \multicolumn{2}{c}{\textbf{Cloth}} \\
\cline{2-9}
\rowcolor{gray!20}
\multirow{-2}{*}{\diagbox[width=8em,height=2.4em]{\textbf{Methods}}{\textbf{Datasets}}} 
 & NMI & ARI & NMI & ARI & MRR & Hits@3 & MRR & Hits@3 \\
\hline
LION (Ours) & \textbf{24.6$_{ \pm 0.4}$} & \textbf{10.6$_{ \pm 0.4}$} & \textbf{56.8$_{ \pm 0.6}$} & \textbf{48.9$_{ \pm 0.8}$} & \textbf{95.2$_{ \pm 0.2}$} & \textbf{88.7$_{ \pm 0.1}$} & \textbf{92.5$_{ \pm 0.1}$} & \textbf{89.1$_{ \pm 0.2}$} \\
\rowcolor{gray!10} w/o Rotor & 23.7$_{ \pm 0.3}$ & 10.0$_{ \pm 0.3}$ & 56.0$_{ \pm 0.5}$ & 47.9$_{ \pm 0.7}$ & 94.3$_{ \pm 0.2}$ & 88.1$_{ \pm 0.1}$ & 91.8$_{ \pm 0.1}$ & 88.6$_{ \pm 0.2}$ \\
w/o Potential & 23.5$_{ \pm 0.4}$ & 9.4$_{ \pm 0.3}$ & 55.9$_{ \pm 0.5}$ & 48.1$_{ \pm 0.7}$ & 93.9$_{ \pm 0.2}$ & 87.9$_{ \pm 0.1}$ & 90.5$_{ \pm 0.1}$ & 87.8$_{ \pm 0.2}$ \\
\rowcolor{gray!10} w/o Energy & 22.8$_{ \pm 0.5}$ & 8.9$_{ \pm 0.4}$ & 54.6$_{ \pm 0.7}$ & 46.9$_{ \pm 0.9}$ & 93.8$_{ \pm 0.3}$ & 87.0$_{ \pm 0.2}$ & 90.9$_{ \pm 0.2}$ & 87.1$_{ \pm 0.3}$ \\
w/o Consen. & 23.2$_{ \pm 0.4}$ & 8.7$_{ \pm 0.3}$ & 54.8$_{ \pm 0.6}$ & 46.5$_{ \pm 0.8}$ & 93.5$_{ \pm 0.3}$ & 86.4$_{ \pm 0.2}$ & 90.4$_{ \pm 0.2}$ & 87.3$_{ \pm 0.3}$ \\
\rowcolor{gray!10} w/o Scale & 22.9$_{ \pm 0.5}$ & 8.2$_{ \pm 0.2}$ & 54.2$_{ \pm 0.8}$ & 45.9$_{ \pm 1.0}$ & 93.0$_{ \pm 0.4}$ & 86.3$_{ \pm 0.3}$ & 90.1$_{ \pm 0.3}$ & 86.8$_{ \pm 0.4}$ \\

\hline\thickhline
\end{tabular}
}}
\end{table}

\subsection{Ablation Study}
\label{sec: Ablation Study}
    To answer \textbf{Q2}, we conduct an ablation study to investigate the contribution of each module within LION.  
    As detailed in Sec.~\ref{sec: Methodology}, LION consists of two core modules, namely CGP and AHA. 
    To evaluate their impact, we define five module variants. 
    For CGP in Eq.~(\ref{eq: Training-free Geometric Propagation}), (1) w/o Rotor and (2) w/o Potential remove the spatial rotor $\mathcal{R}$ and geometric potential $\Phi$ . 
    For AHA in Eq.~(\ref{eq: energy_gating}), Eq.~(\ref{eq: scale_weight}), and Eq.~(\ref{eq: scale_aggregation}), (3) w/o Energy, (4) w/o Consen., and (5) w/o Scale omit the energy-aware gating, consensus profile, and scale-aware resonance fusion.

    Based on this, Table~\ref{tab: ablation_study} confirms the necessity of module design. 
    Regarding the CGP, removing the spatial rotor and geometric potential leads to a significant performance drop. 
    This verifies that modeling topology as geometric rotations and differentiating interaction principles via semantic curvature is crucial for effective alignment. 
    The two CGP variants also reveal complementary roles of the rotor and potential.
    The rotor determines how neighbor semantics are transported into the target tangent space, while the potential controls the strength of curvature-aware interaction.
    Removing either component weakens the geometric interpretation of propagation and reduces the quality of aligned tokens.
    Regarding the AHA, the w/o Energy variant exhibits poor performance.
This demonstrates that indiscriminately aggregating all propagated
modality components introduces redundancy and confirms that
component-wise filtering based on information density is critical.
    Furthermore, the performance degradation in w/o Scale and w/o Consen. confirms that adaptive fusion guided by the consensus profile allows the model to reconcile receptive fields, which is beneficial for different propagated representations.
    These results support the view that modality fusion should consider both channel reliability and topology scale, rather than relying on a fixed pooling operation over all propagated features.

\begin{figure*}[t]
  \includegraphics[width=\textwidth]{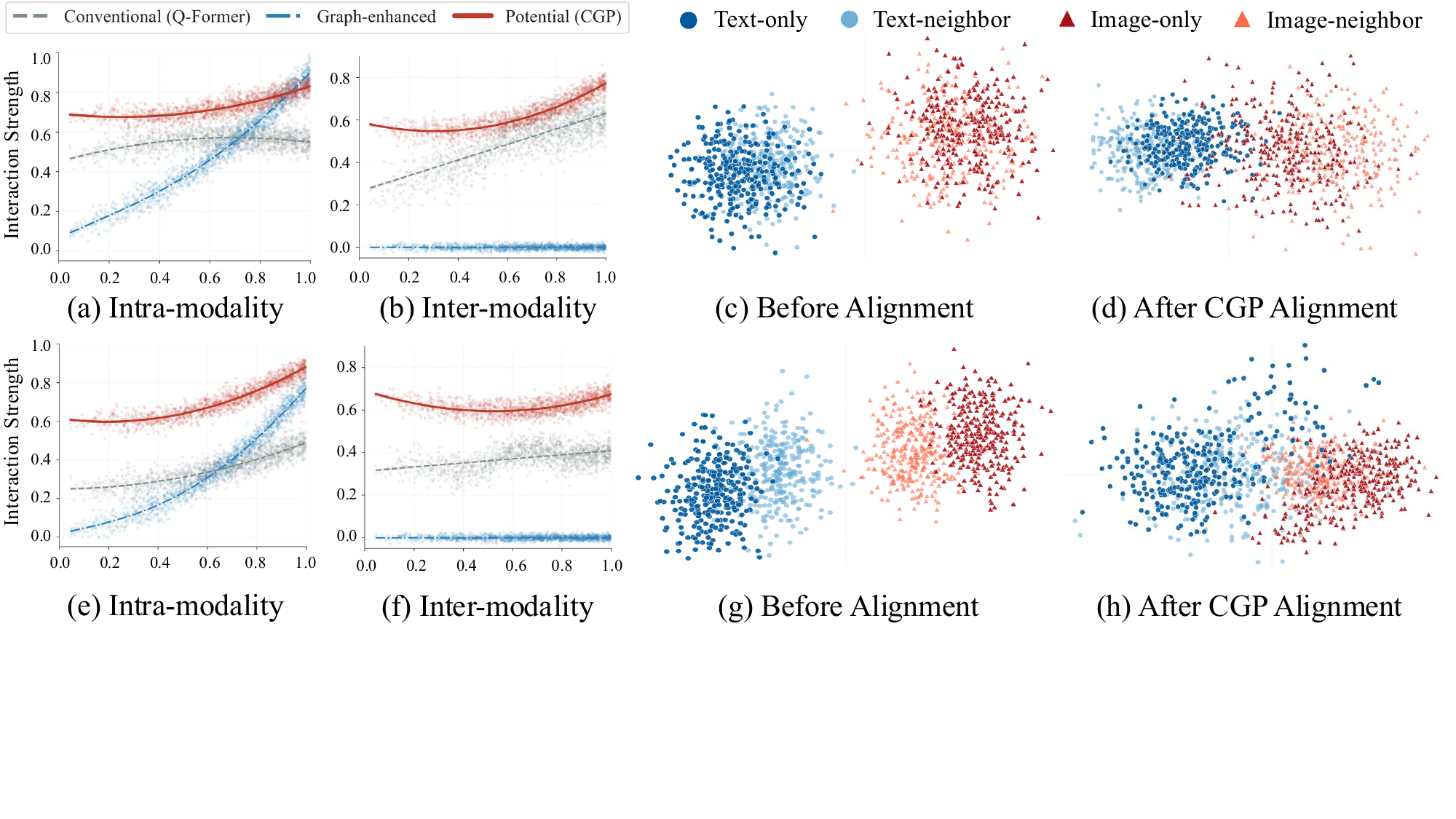}
  \caption{(a)-(d) The G2Text visualization of CGP modality interaction and alignment (i.e., geometric-induced adaptive graph propagation) on Flickr30k.
  (e)-(h) The G2Image visualization of CGP modality interaction and alignment on SemArt. 
  }
  \label{fig: cgp_alignment}
\end{figure*}

\begin{figure}[t]
  \centering
  \includegraphics[width=\linewidth]{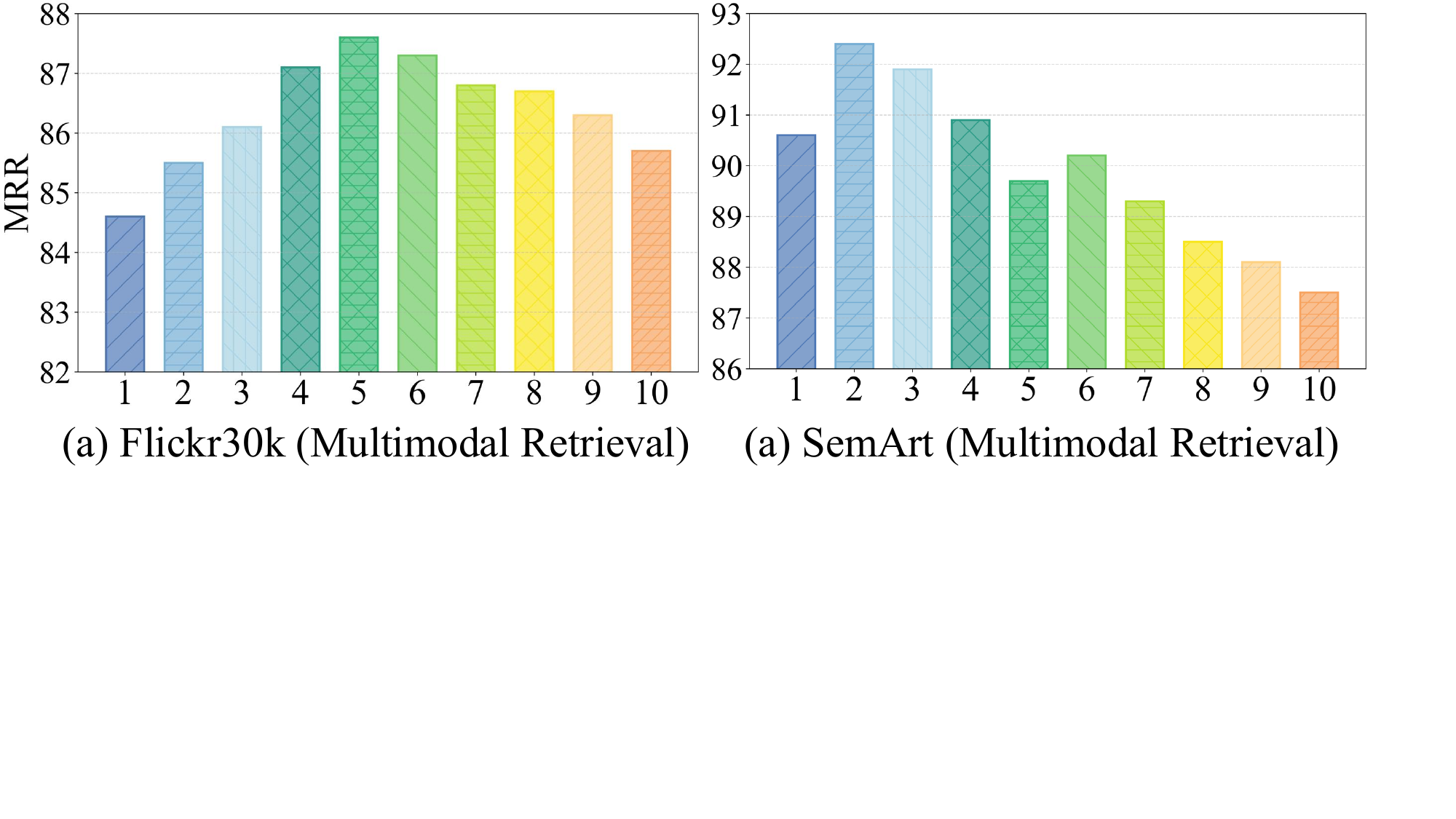}
  \caption{Hyperparameter analysis of CGP depth.}
  \label{fig: cgp_depth}
\end{figure}

\subsection{In-depth Analysis}
\label{sec: In-depth Analysis}

\textbf{CGP Module}.
    To answer \textbf{Q3}, we provide a visual analysis to intuitively interpret the underlying mechanisms of the CGP and AHA modules.
    First of all, we plot the intra- and inter-modality interaction strength (y-axis) against connected node feature similarity (x-axis). For CGP, the plotted quantities are $I_{\rm intra}=\|\langle h_uh_v\rangle_0\|/(\|h_u\|\|h_v\|+\epsilon)$ and $I_{\rm inter}=\|\langle h_uh_v\rangle_2\|/(\|h_u\|\|h_v\|+\epsilon)$; for baselines, we use the corresponding normalized message/attention coefficient, with zero only when no cross-modal message is defined. 
    As evident in Fig.~\ref{fig: cgp_alignment} (a)-(b) for Flickr30k and Fig.~\ref{fig: cgp_alignment} (e)-(f) for SemArt, the modality interaction baselines exhibit a sharp decay in interaction strength as node feature similarity decreases, which reveals a heavy reliance on the homophily assumption.
    Notably, the modality interactions in graph-enhanced methods occur exclusively within modality-specific views, resulting in an inter-modality interaction strength of zero.

    In contrast, CGP breaks the node homophily assumption by modality-aware geometric manifolds and topology-adaptive high-order graph propagation. 
    Specifically, by harmonizing modality interaction principles (Sec.~\ref{sec: Clifford Geometric Propagation}), CGP effectively facilitates both intra- and inter-modality interactions.
    Based on this, we visualize the token latent space before and after alignment as an interpretive diagnostic of how curvature-aware multimodality interaction reshapes the representation; the task results and ablations provide the quantitative performance evidence. 
    The visualization also helps explain why the proposed propagation differs from conventional homophily smoothing.
    Instead of forcing all neighboring nodes to become similar in a scalar space, CGP transports modality-aware geometric bases according to the learned curvature.
    This allows semantically compatible signals to be aligned while preserving the distinction between different interaction channels.
    
    Prior to alignment, the latent spaces of modalities appear disjointed. 
    After CGP, a distinct trend emerges where entity representations transcend modality boundaries to capture high-level semantic dependencies. 
    This strategy effectively gathers semantically related modalities while distancing others.
    Meanwhile, G2Text and G2Image tasks exhibit enhanced text and image compactness, reflecting an adaptive alignment that conforms to the specific downstream requirements.
    This result intuitively demonstrates that our spatial rotor and geometric potential successfully guide the parallel transport of modality attribute vectors for efficient alignment.
    Furthermore, Fig.~\ref{fig: cgp_depth} illustrates the impact of varying CGP propagation depths on model performance. 
    Experimental results suggest that a smaller propagation depth is preferable for dense graphs, while sparse graphs benefit from increased depth to capture a sufficient receptive field.

\begin{figure}[t]
  \centering
  \includegraphics[width=\linewidth]{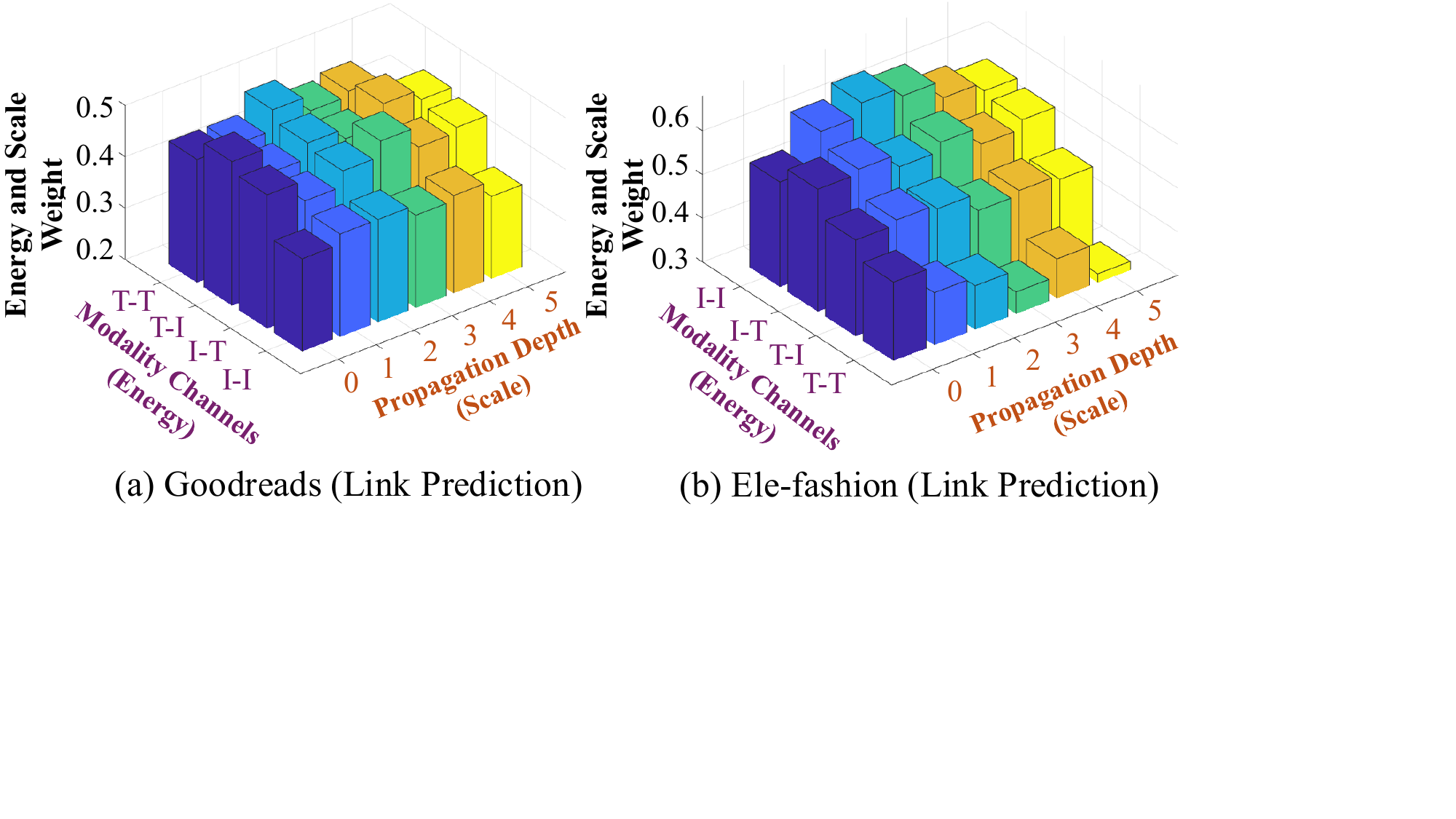}
  \caption{AHA visualization.
  T and I are text and image; T-T, T-I, I-T, and I-I denote ordered modality-pair diagnostics rather than one-to-one Clifford basis blades.}
  \label{fig: aha_fusion}
\end{figure}

\begin{figure*}[t]
  \includegraphics[width=\textwidth]{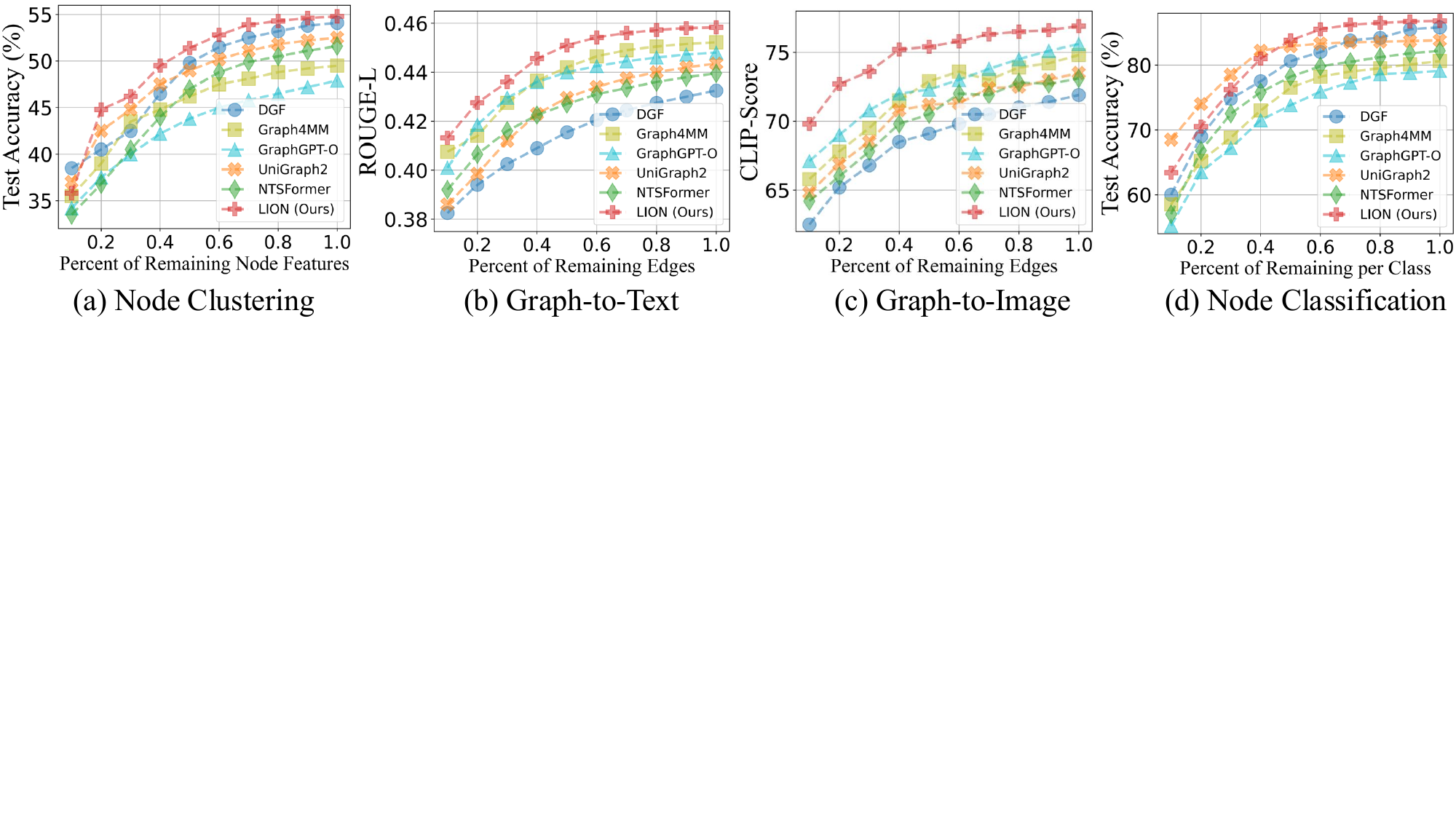}
  \vspace{-0.5cm}
  \caption{Performance of LION under different sparsity levels on the Grocery dataset.}
  \vspace{-0.2cm}
  \label{fig: robust_sparsity}
\end{figure*}

\begin{table}[t]
\setlength{\abovecaptionskip}{-0.01cm}
\centering
\caption{Epoch-Batch efficiency on Goodreads (Classification).}
\label{tab: efficiency}
\footnotesize 
\renewcommand{\arraystretch}{1.1}
\resizebox{\linewidth}{!}{
\setlength{\tabcolsep}{1.5mm}{
\begin{tabular}{l!{\vrule width 0.1pt}c!{\vrule width 0.1pt}c!{\vrule width 0.1pt}c!{\vrule width 0.1pt}c!{\vrule width 0.1pt}c}
\hline\thickhline
\rowcolor{gray!80}
\textbf{\textcolor{white}{Method}} & \textbf{\textcolor{white}{Pre-process (s)}} & \textbf{\textcolor{white}{E-Train. (s)}} & \textbf{\textcolor{white}{E-Infer. (s)}} & \textbf{\textcolor{white}{B-GPU Mem.}} & \textbf{\textcolor{white}{Param.}} \\
\hline
MLaGA       & - & 112.4$_{\pm 1.05}$ & 45.2$_{\pm 0.93}$ & 22.8G & 125M \\
\rowcolor{gray!10} GraphGPT-O  & - & 125.6$_{\pm 1.70}$ & 48.9$_{\pm 1.45}$ & 24.5G & 140M \\
Graph4MM    & 45.2$_{\pm 1.03}$ & 28.5$_{\pm 0.85}$ & 10.4$_{\pm 0.42}$ & 17.2G & 96M \\
\rowcolor{gray!10} InstructG2I & - & 168.2$_{\pm 2.50}$ & 75.6$_{\pm 1.10}$ & 32.1G & 180M \\
\hline
DMGC        & - & 9.2$_{\pm 0.15}$ & 4.1$_{\pm 0.08}$ & 3.8G & 2.4M \\
\rowcolor{gray!10} DGF         & - & 10.8$_{\pm 0.20}$ & 3.6$_{\pm 0.10}$ & 4.1G & 2.6M \\
MIG-GT      & 63.7$_{\pm 1.16}$ & 5.4$_{\pm 0.12}$ & 2.5$_{\pm 0.05}$ & 2.2G & 1.8M \\
\rowcolor{gray!10} NTSFormer   & 75.8$_{\pm 1.25}$ & 4.1$_{\pm 0.10}$ & 1.9$_{\pm 0.04}$ & 2.5G & 2.1M \\
{LION (Ours)} & 92.4$_{\pm 1.49}$ & {3.2}$_{\pm 0.09}$ & {1.2}$_{\pm 0.04}$ & {2.9G} & {1.5M} \\
\hline\thickhline
\end{tabular}
}}
\end{table}

\textbf{AHA Module}.
    First of all, we clarify the modality fusion mechanism implemented from a holographic perspective involving both energy and scale (Sec.~\ref{sec: Adaptive Holographic Aggregation}).
    In Fig.~\ref{fig: aha_fusion}, modality channels are primarily filtered via energy-initialized adaptive gating to select the informative and beneficial interaction subspaces. 
    Then, propagation depth determines the manner in which multi-scale structural contexts are integrated within these interaction channels.
    Based on this, we reveal that LION dynamically assigns varying importance to channels (energy) and depth (scale) according to the specific scenarios. 
    For instance, a distinct modality-aware preference emerges where the model assigns significantly higher weights to text-related subspaces on Goodreads while shifting its focus toward image-related subspaces on Ele-Fashion. 
    Meanwhile, their optimal receptive fields also vary across these datasets to ensure optimal modality fusion. 
    This adaptive behavior is consistent with the heterogeneous nature of MAGs.
    Some domains depend more heavily on textual semantics, such as book summaries and reviews, whereas fashion-related graphs often require visual compatibility cues.
    A fixed fusion rule would obscure such domain-specific preferences, but the energy-scale design allows LION to allocate capacity to the most informative channels and propagation ranges.

\subsection{Robustness Analysis}
\label{sec: Robustness Analysis}
    To answer \textbf{Q4}, we investigate the robustness of LION in sparse scenarios.
    This stems from the fact that MAGs often suffer from data incompleteness.
    As shown in Fig.~\ref{fig: robust_sparsity}, LION demonstrates remarkable robustness across all sparsity scenarios. 
    Meanwhile, LION consistently outperforms other methods in all downstream tasks. 
    This superiority is attributed to the Clifford geometric manifold, which can mitigate missing data through high-order graph propagation. 
    This indicates that our alignment-then-fusion mechanism effectively exploits the complementary nature of topology and modality, allowing the model to learn high-quality representations even when raw information is severely corrupted.
    In sparse scenarios, local attributes alone become unreliable because either modality content or neighborhood evidence may be partially absent.
    CGP alleviates this issue by propagating curvature-aware context from broader graph neighborhoods, while AHA filters noisy channels before final prediction.
    As a result, the model remains stable even when the input graph provides incomplete or imbalanced multimodal evidence.

\begin{figure}[t]
  \centering
  \includegraphics[width=\linewidth]{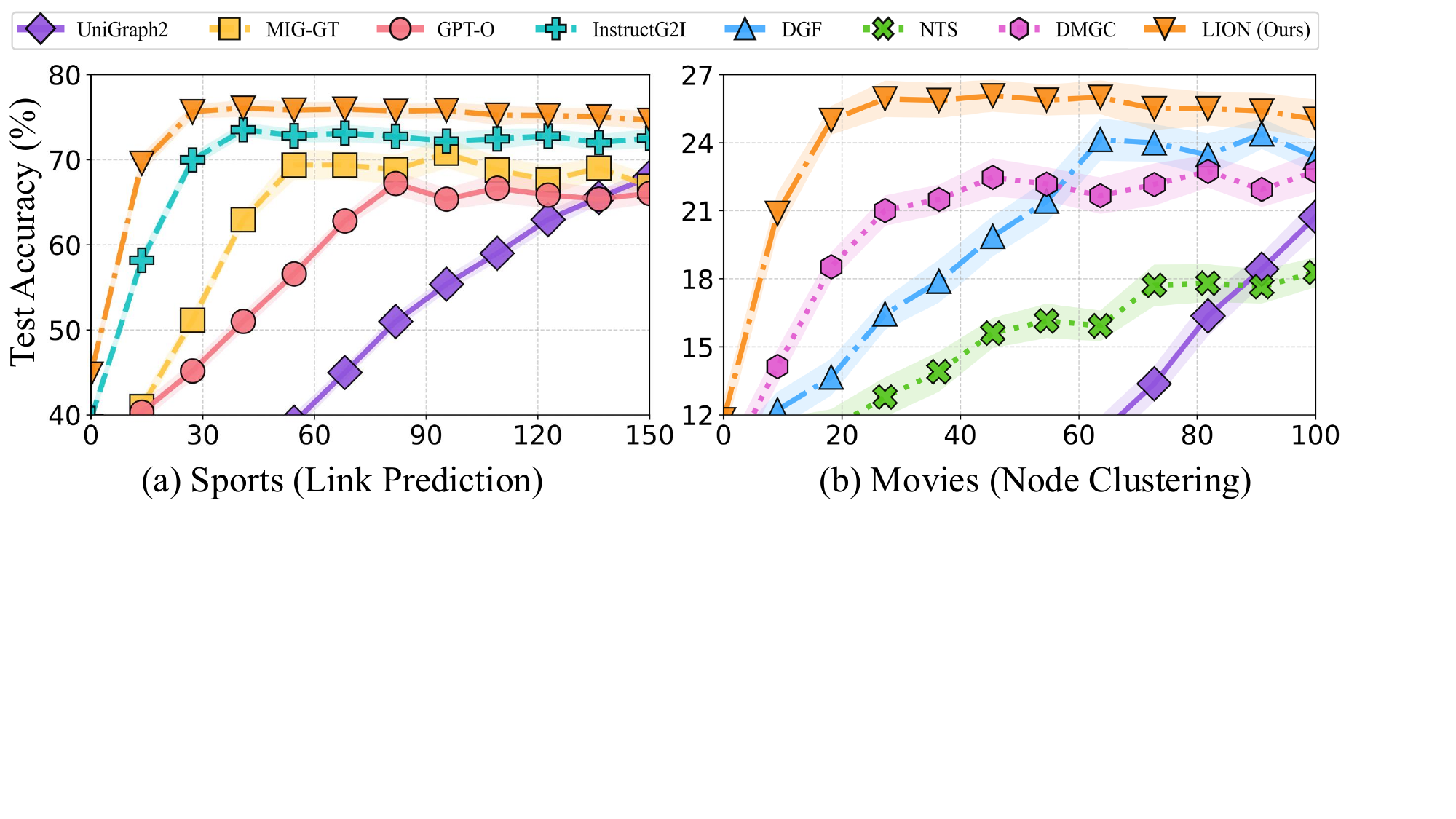}
  \caption{The convergence efficiency curves.}
  \label{fig: convergence}
\end{figure}

\subsection{Running Efficiency}
\label{sec: Running Efficiency}
    To answer \textbf{Q5}, we provide the computational overhead and convergence curves of LION.
    As summarized in Table~\ref{tab: efficiency}, LION demonstrates superior efficiency compared to other baselines. 
    This advantage is primarily attributed to our decoupled graph neural paradigm, which separates intensive geometric computations from the training loop. 
    Specifically, the construction of the spatial rotor and geometric potential within the CGP is relegated to a one-time pre-processing.
    Consequently, the training phase focuses exclusively on the AHA and avoids the need for recursive neighborhood expansion.
    Fig.~\ref{fig: convergence} illustrates that LION exhibits a rapid convergence rate, consistently reaching peak performance in significantly fewer epochs than other baselines. 
    This empirical training curve is consistent with using stable cached CGP tokens, while Theorem~\ref{theorem2} concerns spectral contraction of the fixed CGP propagation operator rather than the convergence rate of the downstream optimizer. 
    By reducing repeated structural computation in the trainable loop, LION facilitates efficient adaptation.
    The convergence behavior also reflects the benefit of decoupling propagation from trainable aggregation.
    Since CGP produces stable multiscale tokens before optimization, the trainable module does not need to repeatedly discover structural context from scratch.
    Instead, it focuses on selecting informative interaction channels and receptive fields, which explains the lower epoch-level training and inference cost in Table~\ref{tab: efficiency}.

\section{Conclusion}
    With the rapid advancement of multimodal domains, MAG has emerged as a critical frontier for enhancing graph representation and downstream utility. 
    In this work, we propose LION, which unifies topology with modality by constructing the geometric manifold grounded in Clifford algebra. 
    Our theoretical analysis establishes bounded geometric sensitivity, conditional spectral contraction of CGP, and stability bounds for AHA.
    Extensive experiments across a broad spectrum of supervised and unsupervised tasks demonstrate that LION significantly outperforms SOTA baselines, confirming its superiority in both graph and
    modality tasks.
    Although the manifold dimension scales exponentially with the number of modalities, this overhead remains marginal in practice, as the modality count is typically limited.
    Future work includes the integration of LION into broader applications such as Graph QA. 
    Furthermore, enhancing the utility of Multimodal Large Models within LION to exploit cross-modal semantics under topology priors is also a compelling direction.

\clearpage
\appendices

\section{Proof of Geometric Stability in Theorem 1}
\label{appendix: Proof of Geometric Stability}
    This section proves the stability statement in Theorem 1 using the same initialization, potential, and stabilized rotor as the main paper.

\begin{proof}
    For modality $k$, let $\bar{x}_u^{(k)}=g_k(x_u^{(k)})\in\mathbb R^d$ and $\hat{x}_u^{(k)}=\bar{x}_u^{(k)}/\max(\|\bar{x}_u^{(k)}\|_2,\epsilon_x)$ with fixed $\epsilon_x>0$. Assume each $g_k$ is Lipschitz on the bounded input domain. The initialization is
    \begin{equation}
    \mathbf H_u^{(0)}=\frac1{\sqrt K}\sum_{k=1}^K\hat{x}_u^{(k)}e_k,\qquad \|\mathbf H_u^{(0)}\|_{\mathcal Cl}\le1.
    \end{equation}
    Stabilized normalization is Lipschitz, and the orthogonal Clifford lift is linear; hence for some $L_H<\infty$,
    \begin{equation}
    \|\mathbf H-\mathbf H'\|_{\mathcal Cl}\le L_H\|\mathbf X-\mathbf X'\|_{\mathcal Cl}\equiv\delta_H.
    \end{equation}
    For an edge, the geometric product is bilinear. Because both lifted representations have norm at most one,
    \begin{equation}
    \|\mathbf H_u\mathbf H_v-\mathbf H'_u\mathbf H'_v\|_{\mathcal Cl}\le2\delta_H,
    \end{equation}
    and the Grade-0 and Grade-2 projections are non-expansive. Let $\mathcal S_{uv}=\langle\mathbf H_u\mathbf H_v\rangle_0$ and $\mathcal B_{uv}=\langle\mathbf H_u\mathbf H_v\rangle_2$. The rotor used in the main paper is
    \begin{equation}
    \begin{aligned}
    \widehat{\mathcal B}_{uv}&=\frac{\mathcal B_{uv}}{\sqrt{\|\mathcal B_{uv}\|^2+\epsilon_r^2}},\\
    \theta_{uv}&=\operatorname{atan2}(\|\mathcal B_{uv}\|,\|\mathcal S_{uv}\|+\epsilon_r),\\
    \mathcal R_{uv}&=\exp(-\theta_{uv}\widehat{\mathcal B}_{uv}/2).
    \end{aligned}
    \end{equation}
    With fixed $\epsilon_r>0$, both stabilized normalization and $\operatorname{atan2}$ are Lipschitz on the bounded domain, including $\|\mathcal B_{uv}\|\rightarrow0$; the exponential is Lipschitz on the resulting compact set. Thus $\|\mathcal R-\mathcal R'\|\le L_R\delta_H$. The potential in the main paper contains a positive stabilizer $\epsilon$ and is likewise Lipschitz, $\|\Phi-\Phi'\|\le L_\Phi\delta_H$.

    Writing $\mathcal A_{\mathcal G}=\mathbf A\circ(\Phi,\mathcal R)$, bounded degree and the product rule give
    \begin{equation}
    \|\mathcal A_{\mathcal G}-\mathcal A'_{\mathcal G}\|_{\mathcal Cl}
    \le C_1\delta_H+C_2\|\mathbf A-\mathbf A'\|_{\mathcal Cl}.
    \end{equation}
    Combining the two inequalities yields
    \begin{equation}
    \begin{aligned}
    &\|\mathbf H-\mathbf H'\|_{\mathcal Cl}+\|\mathcal A_{\mathcal G}-\mathcal A'_{\mathcal G}\|_{\mathcal Cl}\\
    &\quad\le K_{\rm map}(\|\mathbf X-\mathbf X'\|_{\mathcal Cl}+\gamma\|\mathbf A-\mathbf A'\|_{\mathcal Cl}).
    \end{aligned}
    \end{equation}
    where $K_{\rm map}$ and $\gamma$ depend on the encoder/projection bounds, graph degree, and fixed stabilizers. This proves Theorem 1 without requiring the bi-vector norm to be bounded away from zero.
\end{proof}

\section{Proof of Spectral Evidence in Theorem 2}
\label{appendix: Proof of Spectral Evidence}
    We analyze exactly the normalized CGP update defined in the main paper. During one cached propagation stage, assume an undirected graph, symmetric potentials $\Phi_{uv}=\Phi_{vu}$, reciprocal isometric transports $\mathcal T_{vu}=\mathcal T_{uv}^{-1}$, and fixed edge operators. Define $\bar\Phi_{uu}=1$, $\bar\Phi_{uv}=\Phi_{uv}$, $d_u=\sum_{k\in\mathcal N(u)\cup\{u\}}\bar\Phi_{uk}$, and $\tilde\Phi_{uv}=\bar\Phi_{uv}/d_u$. Then the main-paper propagation is
    \begin{equation}
    \mathbf H_u^{(l)}=\sum_{v\in\mathcal N(u)\cup\{u\}}\tilde\Phi_{uv}\,\mathcal T_{uv}(\mathbf H_v^{(l-1)})
    \equiv(\mathcal P_{\mathcal G}\mathbf H^{(l-1)})_u.
    \end{equation}
    The corresponding unnormalized connection Dirichlet energy is
    \begin{equation}
    \mathbb E_{\rm Dir}(\mathbf H)=\frac12\sum_{(u,v)\in\mathcal E}\Phi_{uv}\|\mathbf H_u-\mathcal T_{uv}(\mathbf H_v)\|_{\mathcal Cl}^2.
    \end{equation}
    Symmetry, reciprocity, and isometry make the associated connection Laplacian positive semidefinite. After degree normalization, $\mathcal P_{\mathcal G}$ is the connection averaging operator associated with this Laplacian (including the stated self loop) and is self-adjoint under the degree-weighted inner product $\langle\mathbf X,\mathbf Y\rangle_D=\sum_u d_u\langle\mathbf X_u,\mathbf Y_u\rangle_{\mathcal Cl}$. Let $\Pi$ be the corresponding orthogonal projection onto the unit-eigenspace of $\mathcal P_{\mathcal G}$. Decompose
    $\mathbf H^{(0)}=\Pi\mathbf H^{(0)}+(\mathbf I-\Pi)\mathbf H^{(0)}$. Because $\mathcal P_{\mathcal G}\Pi=\Pi$,
    \begin{equation}
    \mathbf H^{(l)}-\Pi\mathbf H^{(0)}=\mathcal P_{\mathcal G}^{l}(\mathbf I-\Pi)\mathbf H^{(0)}.
    \end{equation}
    If the spectral radius of $\mathcal P_{\mathcal G}$ on the complementary invariant subspace is $\rho<1$, operator submultiplicativity gives
    \begin{equation}
    \|\mathbf H^{(l)}-\Pi\mathbf H^{(0)}\|_{D,\mathcal Cl}
    \le\rho^l\|\mathbf H^{(0)}-\Pi\mathbf H^{(0)}\|_{D,\mathcal Cl},
    \end{equation}
    This is the contraction statement in Theorem 2. The assumptions are explicit: the result applies to the fixed reciprocal transport operator used for cached CGP and does not claim the same spectral rate for arbitrary time-varying edge operators. Because each $\mathcal T_{uv}$ is a rotor sandwich map, it is isometric and does not collapse the orthogonal Clifford basis merely by transport.

\section{Proof of AHA Stability and Consensus Bound in Theorem 3}
\label{appendix: Proof of Holographic Reconstruction}

This section proves the AHA stability inequalities in Theorem~3.
Importantly, the proof only uses the linear part of the affine
Clifford projection and does not assume that the learned consensus
profile is a downstream-optimal representation.

\begin{proof}
Let the affine Clifford projection used in the main paper be
\begin{equation}
    f_{\mathrm{cll}}(\mathbf X)
    =
    \mathbf W_{\mathrm{out}}\mathbf X
    +
    \mathbf b_{\mathrm{out}},
\end{equation}
and assume
\begin{equation}
    \|\mathbf W_{\mathrm{out}}\|_{2\rightarrow2}
    \le \omega.
\end{equation}

Define
\begin{equation}
\begin{aligned}
    \mathbf Z_u
    &=
    f_{\mathrm{cll}}
    \left(
    \sum_{l=0}^{L}
    \beta_{u,l}
    \widetilde{\mathbf H}_u^{(l)}
    \right),\\
    \mathbf Z_u^{\mathrm{raw}}
    &=
    f_{\mathrm{cll}}
    \left(
    \sum_{l=0}^{L}
    \beta_{u,l}
    \mathbf H_u^{(l)}
    \right).
\end{aligned}
\end{equation}

Since the bias term cancels when two affine outputs are subtracted,
we obtain
\begin{equation}
\begin{aligned}
\mathbf Z_u-\mathbf Z_u^{\mathrm{raw}}
&=
\mathbf W_{\mathrm{out}}
\sum_{l=0}^{L}
\beta_{u,l}
\left(
\widetilde{\mathbf H}_u^{(l)}
-
\mathbf H_u^{(l)}
\right).
\end{aligned}
\end{equation}
Using
$\widetilde{\mathbf H}_u^{(l)}
=
\boldsymbol\alpha_u^{(l)}
\odot\mathbf H_u^{(l)}$,
the operator-norm inequality and triangle inequality yield
\begin{equation}
\begin{aligned}
\|
\mathbf Z_u-\mathbf Z_u^{\mathrm{raw}}
\|_2
&\le
\omega
\left\|
\sum_{l=0}^{L}
\beta_{u,l}
\left(
\widetilde{\mathbf H}_u^{(l)}
-
\mathbf H_u^{(l)}
\right)
\right\|_{\mathcal Cl}\\
&\le
\omega
\sum_{l=0}^{L}
\beta_{u,l}
\left\|
(\mathbf 1-\boldsymbol\alpha_u^{(l)})
\odot
\mathbf H_u^{(l)}
\right\|_{\mathcal Cl}.
\end{aligned}
\end{equation}
This proves the first inequality and quantifies the representation
change introduced by the soft component gate.
No assumption is made that an attenuated component is noise.

For the consensus bound, because
$\sum_{l=0}^{L}\beta_{u,l}=1$,
\begin{equation}
\begin{aligned}
&\mathbf Z_u
-
f_{\mathrm{cll}}
\left(
\mathbf H_u^{\mathrm{ctx}}
\right)\\
&=
\mathbf W_{\mathrm{out}}
\left[
\sum_{l=0}^{L}
\beta_{u,l}
\widetilde{\mathbf H}_u^{(l)}
-
\mathbf H_u^{\mathrm{ctx}}
\right]\\
&=
\mathbf W_{\mathrm{out}}
\sum_{l=0}^{L}
\beta_{u,l}
\left(
\widetilde{\mathbf H}_u^{(l)}
-
\mathbf H_u^{\mathrm{ctx}}
\right).
\end{aligned}
\end{equation}
Therefore,
\begin{equation}
\begin{aligned}
\left\|
\mathbf Z_u
-
f_{\mathrm{cll}}
(\mathbf H_u^{\mathrm{ctx}})
\right\|_2
&\le
\omega
\left\|
\sum_{l=0}^{L}
\beta_{u,l}
\left(
\widetilde{\mathbf H}_u^{(l)}
-
\mathbf H_u^{\mathrm{ctx}}
\right)
\right\|_{\mathcal Cl}\\
&\le
\omega
\sum_{l=0}^{L}
\beta_{u,l}
\left\|
\widetilde{\mathbf H}_u^{(l)}
-
\mathbf H_u^{\mathrm{ctx}}
\right\|_{\mathcal Cl}.
\end{aligned}
\end{equation}

This proves the second inequality.
The result is a consensus-deviation bound and does not identify
$\mathbf H_u^{\mathrm{ctx}}$ with the unknown downstream-optimal
representation.
\end{proof}

\section{Theoretical Foundations of LION}
    The preceding analyses establish three scoped properties of LION. Theorem 1 bounds the sensitivity of the Clifford construction under bounded encoders/projections and fixed numerical stabilizers. Theorem 2 gives a conditional spectral contraction result for the same normalized, fixed reciprocal CGP operator used in the main paper. Theorem 3 bounds the representation change introduced by AHA gating and its deviation from the learned multiscale consensus. These results support stability and operator consistency; they do not assume that attenuated grades are necessarily noise, that the consensus is a downstream optimum, or that the fixed-operator spectral rate describes the downstream neural optimizer.

\section{Algorithm and Complexity Details}
\label{appendix: Algorithm and Complexity Details}
    We provide additional details for the algorithmic flow and computational complexity of LION.
    The data flow begins with the initialization of the Clifford geometric manifold.
    Given a multimodal-attributed graph $\mathcal{G}$, the modality-oriented isomorphic lifting map projects raw scalar features $\mathbf{X}$ into the Grade-1 subspace of the Clifford algebra $\mathcal{C}l_K$.
    Assigning each modality to a unique orthogonal basis vector $e_k$ preserves the independence of modality axes while unifying them in the multi-vector representation $\mathbf{H}^{(0)}$.
    Topology modeling is then performed by computing the geometric product between connected nodes, whose Grade-0 and Grade-2 projections provide the same-modality compatibility and oriented cross-modality discrepancy statistics defined in the main paper.
    These edgewise statistics instantiate the topology-oriented geometric potential $\Phi$ and spatial rotor $\mathcal{R}$, which jointly encode semantic curvature on the Clifford manifold.
    Guided by these operators, CGP conducts training-free high-order propagation through potential-gated parallel transport, producing aligned curvature-adaptive tokens $\{\mathbf{H}^{(1)},\dots,\mathbf{H}^{(L)}\}$.

    For the two-modality case, the text and visual attributes may have different raw dimensions $d_t$ and $d_i$; modality-specific projections first map both to the shared dimension $d$ before the Grade-1 lift. The $Cl_2$ storage then uses the basis slots $\{1,e_1,e_2,e_{12}\}$, with the normalized text and image features assigned to $e_1$ and $e_2$.
    The scalar and bi-vector quantities used by CGP are edgewise geometric-product statistics; they are not four ordered modality-pair channels stored one-to-one in these basis slots. Accordingly, the T-T, T-I, I-T, and I-I labels in the visualization denote diagnostic ordered-pair contributions rather than Clifford basis labels.
    AHA quantifies the information density of the resulting Clifford token components, applies a soft learnable gate, and fuses different propagation depths through resonance scores against the consensus profile $\mathbf{H}^{\text{ctx}}$. The final representation is obtained by projecting the filtered and scale-weighted Clifford multi-vector back into Euclidean space.

    We further detail the time-space complexity.
    Let $N=|\mathcal{V}|$ and $M=|\mathcal{E}|$ denote the numbers of nodes and edges, $L$ the propagation depth, and $D=2^K d$ the Clifford multi-vector dimension for $K$ modalities.
    Manifold initialization via the lifting map requires $\mathcal{O}(N\cdot D)$.
    Constructing the geometric potential and spatial rotor requires evaluating the geometric product on each edge, leading to $\mathcal{O}(M\cdot 2^K\cdot D)$. For $K>2$, the scalar component aggregates same-modality compatibility and the $\binom{K}{2}$ Grade-2 coefficients encode oriented pairwise cross-modality discrepancies as defined in the main paper; higher grades are available algebraically but are not empirically evaluated here.
    Curvature-adaptive high-order propagation costs $\mathcal{O}(L\cdot M\cdot D)$.
    Since CGP is training-free and parameter-independent, this phase is executed once and cached.
    During training, LION optimizes only AHA: energy-aware grade filtering and scale-aware resonance fusion both scale as $\mathcal{O}(L\cdot N\cdot D)$.
    Therefore, the total computational cost can be summarized as $\mathcal{O}(L(M\cdot D+N\cdot 2^K\cdot D))$, while remaining linear in graph size for fixed $K$ and $L$.
    The cached multiscale propagated features require $\mathcal{O}(L\cdot N\cdot D)$ memory, and the learnable projection, gating, and attention parameters require $\mathcal{O}(D^2)$ space.
    This decoupled design avoids recursive neighborhood expansion during back propagation and enables scalable mini-batch training.

\section{Dataset Details}
\label{appendix: Dataset Details}
    All nine benchmarks used in this paper provide text and image attributes; although the Clifford formulation is defined for $K$ modality axes, empirical claims in this work are restricted to the $K=2$ text-image setting. Dataset-task assignments follow Table I and the headers of the corresponding result tables in the main paper. Graph construction uses only benchmark relations or source metadata available independently of the downstream target split; class labels, held-out links, retrieval correspondences, and generation references are excluded from edge construction to avoid target leakage.

    \textbf{RedditS}~\cite{desai1_RedditS}
    is a social network from Reddit where nodes represent posts and edges denote threading relationships (e.g., comments and replies). 
    Textual features are encoded from titles and body content, while visual features are extracted from embedded images.
    This dataset is used for node clustering in the reported graph-task comparison.

    \textbf{Movies}~\cite{ni2019_Grocery_Cloth_Ele_Movies_Sports} 
    is sourced from Amazon's Movies and TV category. 
    Nodes correspond to DVD/Blu-ray products, and edges reflect consumer co-purchasing behavior. 
    Node attributes include textual plot synopses and customer reviews, alongside visual features derived from official cover art. 
    This dataset is used for node classification in the reported graph-task comparison.

    \textbf{Grocery}~\cite{ni2019_Grocery_Cloth_Ele_Movies_Sports} 
    originates from Amazon's Grocery and Gourmet Food segment. 
    Edges indicate complementary purchasing habits derived from "also-bought" metadata. 
    Textual features are encoded from product titles and nutritional descriptions, while visual features are extracted from packaging images. 
    This dataset is used for node clustering in the reported graph-task comparison.

    \textbf{SemArt}~\cite{garcia2018_SemArt} 
    is a fine-art dataset where nodes represent paintings and edges are established based on shared metadata such as artist, period, or school. 
    Node features include expert historical commentary and stylistic visual attributes from digital images. 
    This dataset is used for Graph-to-Image (G2Image) in the reported modality-task comparison, with its paired textual and visual attributes retained as node modalities.

    \textbf{Flickr30k}~\cite{plummer2015_Flickr30k} 
    is a canonical image-text reasoning dataset. 
    In the OpenMAG setting, we follow the released benchmark graph relations, which are fixed before the downstream split; held-out target captions used for evaluation are not used to construct edges. 
    This dataset is utilized for Graph-to-Text (G2Text) tasks, evaluating the model's ability to generate descriptive captions from graph-structured visual-textual context.

    \textbf{Sports}~\cite{hou2024_Cloth_Ele_Sports, ni2019_Grocery_Cloth_Ele_Movies_Sports} 
    is an Amazon-based graph of athletic gear where edges denote functional complementarity.
    It incorporates technical specifications (text) and product images (visual) to capture design and utility. 
    This dataset is primarily utilized for link prediction, aiming to forecast potential product associations by modeling the geometric compatibility between sports-related modalities.

    \textbf{Ele-fashion}~\cite{ni2019_Grocery_Cloth_Ele_Movies_Sports,hou2024_Cloth_Ele_Sports} 
    is a heterogeneous graph merging Amazon's Electronics and Fashion categories. Nodes are connected via cross-category co-purchasing links, revealing latent consumer preferences across disparate domains.
    Features combine technical specs with style descriptions and product imagery. 
    This dataset is used for modality retrieval in the reported modality-task comparison.

    \textbf{Cloth}~\cite{ni2019_Grocery_Cloth_Ele_Movies_Sports,hou2024_Cloth_Ele_Sports} 
    is a large-scale Amazon fashion graph linking items via visual compatibility or co-purchase history. 
    It utilizes material compositions (text) and high-resolution model images (visual).
    The dataset is used for link prediction in the reported graph-task comparison, where visual and textual compatibility provides a test for modality fusion and alignment.

    \textbf{Goodreads}~\cite{wan2018_Goodreads_NC,wan2019_Goodreads_NC} 
    is a book review graph where edges represent co-shelving relationships. 
    Node features are derived from book summaries, editorial reviews, and cover art.
    The dataset is used for node classification in the reported graph-task comparison, requiring the synthesis of aesthetic cover cues with the semantic depth of textual summaries.

\section{Baselines Details}
\label{appendix: Baselines Details}

    \textbf{GCN}~\cite{kipf2016gcn} utilizes a localized first-order approximation of spectral graph convolutions based on the renormalization trick.
    It scales linearly with the number of edges and learns hidden representations that jointly capture graph topology structure and node attribute features via a layer-wise propagation rule.
    
    \textbf{GCNII}~\cite{chen2020gcnii} extends the conventional GCN by incorporating two key techniques: initial residual connections and identity mapping.
    These mechanisms effectively simulate lazy random walks and are theoretically and empirically shown to alleviate the over-smoothing issue, allowing the model to stack deep architectures without performance degradation.
    
    \textbf{GAT}~\cite{velivckovic2017gat} introduces masked self-attention layers to assign learnable importance scores to neighbors during aggregation.
    This strategy enables the model to handle anisotropic graphs by implicitly assigning varying weights to different nodes, thereby capturing complex local structural patterns without relying on a fixed, predefined graph Laplacian.
    
    \textbf{GATv2}~\cite{brody2021gatv2} implements a dynamic graph attention mechanism that strictly improves upon the static attention of the original GAT.
    By modifying the order of operations in the attention scoring function, it achieves universal approximation capability, offering enhanced representation expressiveness and increased robustness against graph noise and irrelevant connections.
    
    \textbf{MMGCN}~\cite{hu2021mmgcn} is a pioneering framework for micro-video recommendation that explicitly models user preferences across visual, acoustic, and textual modalities.
    It constructs separate modality-specific bipartite graphs and aggregates high-order connectivity within each graph before combining them through a structured fusion layer, effectively capturing the user-item interactions inherent in each sensory channel.
    
    \textbf{MGAT}~\cite{tao2020mgat} employs a gated attention mechanism within parallel multimodal interaction graphs for recommendation.
    It adaptively identifies the importance of specific modalities to disentangle granular personal interests, effectively acting as a denoising filter to reduce the influence of noisy or conflicting multimodal signals during preference learning.
    
    \textbf{MLaGA}~\cite{fan2025mlaga} enables Large Language Models (LLMs) to reason over MAGs via a coherent two-stage alignment strategy.
    It first aligns visual and textual features with the graph structure through contrastive graph pre-training, and subsequently performs instruction tuning to seamlessly integrate graph connectivity priors into the LLM's generative reasoning process.
    
    \textbf{GraphGPT-O}~\cite{fang2025graphgpt_o} is a comprehensive multimodal LLM designed for joint comprehension and generation tasks on MAGs.
    It addresses the challenge of encoding non-Euclidean dependencies and scalability by employing personalized PageRank sampling and a hierarchical aligner equipped with both node-level and graph-level Q-Formers to bridge structural tokens with the LLM semantic space.
    
    \textbf{Graph4MM}~\cite{ning2025graph4mm} integrates multi-hop structural information directly into the self-attention mechanism via a hop-diffused attention strategy.
    It utilizes a specialized MM-QFormer for principled cross-modal fusion, demonstrating that treating graph topology as a guided interaction modality significantly outperforms approaches that treat the graph merely as an auxiliary input feature.
    
    \textbf{InstructG2I}~\cite{jin2024instructg2i} is a graph-conditioned diffusion model specifically designed for MAGs.
    It utilizes semantic neighbor sampling to construct contextual prompts and employs a Graph-QFormer to encode these prompts, offering fine-grained controllability over the generative process through a novel graph classifier-free guidance mechanism.
    
    \textbf{DMGC}~\cite{guo2025DMGC} addresses complex hybrid neighborhood patterns by explicitly disentangling the graph into cross-modality homophily-enhanced and modality-specific heterophily-aware components.
    It utilizes a dual-frequency fusion mechanism, which acts as coupled low-pass and high-pass filters to simultaneously capture both intra-class commonalities (smoothness) and inter-class distinctions (boundaries).
    
    \textbf{DGF}~\cite{zheng2025dgf} proposes a cross-contrastive clustering framework that utilizes a dual graph filtering scheme to systematically denoise features extracted from MAG.
    It employs a tri-cross contrastive objective spanning modalities, topology neighborhoods, and semantic communities to learn discriminative clustering representations robust to outliers.
    
    \textbf{MIG-GT}~\cite{hu2025mig_gt} utilizes modality-independent GNNs with adaptive receptive fields to accommodate the unique propagation requirements and noise levels of distinct modalities.
    To complement local aggregation, it integrates a sampling-based global transformer, enabling the model to capture long-range semantic dependencies and global context that standard message passing often fails to preserve.
    
    \textbf{NTSFormer}~\cite{hu2025ntsformer} introduces a self-teaching graph transformer tailored for isolated cold-start node classification.
    It employs a stochastic cold-start attention mask to supervise a student prediction (based solely on self-information) with a teacher prediction (which is neighbor-aware), thereby ensuring robust performance and generalization even when structural connections or modal attributes are partially missing.
    
    \textbf{UniGraph2}~\cite{he2025unigraph2} is a cross-domain graph foundation model that integrates diverse modalities into a unified embedding space.
    It leverages frozen pre-trained encoders and a mixture-of-experts (MoE) module for scalable alignment, followed by a universal GNN for structural aggregation, facilitating the learning of generalizable representations that transfer effectively across various downstream tasks and domains.

\paragraph{Baseline adaptation and tuning protocol.}
    We retain each baseline's released backbone and modality preprocessing whenever available and evaluate it on the same dataset split and downstream metric. If a released implementation lacks the required task head, only the task head is adapted while the representation module is kept unchanged. Hyperparameters and early stopping are selected using validation data only, never the test set. Shared task-side dimensions and stopping rules are used when the original method permits them; otherwise, the authors' recommended method-specific settings are retained. Graph edges follow the target-independent construction protocol described in Appendix F.

\section{Evaluation Protocols}
\label{appendix: Evaluation Protocols}
    To ensure a rigorous and standardized evaluation across diverse MAG learning models, we implement specific experimental configurations for three primary graph-centric downstream tasks: supervised node classification, supervised link prediction, and unsupervised node clustering. 
    For node classification and node clustering, the learning rate is established at $5 \times 10^{-3}$ with a batch size of 512 and a weight decay of $1 \times 10^{-5}$. 
    While 100 training epochs are sufficient for convergence in the node classification task, node clustering requires an extended duration of 500 epochs to stabilize the underlying self-supervised objectives. 
    In contrast, the link prediction task utilizes an adjusted learning rate of $1 \times 10^{-3}$ and a significantly larger batch size of 2048 to facilitate the efficient processing of extensive edge-pair samples. 
    To maintain architectural consistency and ensure a fair comparison, we standardize core parameters by employing CLIP-ViT-Large-Patch14 as the default frozen feature encoder, with feature dimensions unified at 768 across all tasks.
    For graph-task comparisons and ablations, results are obtained from five independent runs and reported as mean$\pm$standard deviation, matching entries marked with ``$\pm$'' in the main paper.
    
    \textbf{Node Classification} is the fundamental supervised task in graph learning. 
    Given a MAG, the model encodes each node into a low-dimensional embedding. 
    These representations are subsequently processed by a projection head and a Softmax layer to generate class probabilities. 
    Optimization is performed by minimizing the cross-entropy loss between the predicted distributions and ground-truth labels.
    Model performance is quantified using Accuracy (Acc) and the F1-score.
    
    Specifically, Acc measures the proportion of correctly predicted samples over the evaluation set: $\text{Acc} = \frac{1}{N} \sum_{i=1}^{N} \mathbb{I}(\hat{y}_i = y_i)$, where $N$ is the sample size, $y_i$ is the ground-truth, $\hat{y}_i$ is the prediction, and $\mathbb{I}(\cdot)$ is the indicator function.
    F1-score is the harmonic mean of Precision and Recall, providing a robust evaluation under class imbalance common in real-world graphs: $\text{F1} = 2 \left(\cdot \text{Precision} \cdot \text{Recall}\right)/\left(\text{Precision} + \text{Recall}\right)$.

    \textbf{Link Prediction}
    measures the model's capacity to infer missing or potential edges. 
    The model computes similarity scores between node pairs (e.g., via dot product) and assigns higher scores to true edges than to negative samples. 
    In multimodal settings, this requires aligning structural proximity with cross-modal semantic similarity. 
    Evaluation relies on ranking-based metrics: Mean Reciprocal Rank (MRR) and Hits@K.
    
    Specifically, MRR assesses the model's ability to prioritize the correct target at the top of a candidate list: $\text{MRR} = \frac{1}{|Q|} \sum_{i=1}^{|Q|} \frac{1}{\text{rank}_i}$, where $\text{rank}_i$ denotes the rank of the first correct result for query $i$.
    Hits@K reflects the recall performance at a fixed cutoff $K$, indicating the practical utility of the retrieval results: $\text{Hits@K} = \frac{1}{|Q|} \sum_{i=1}^{|Q|} \mathbb{I}(\text{rank}_i \leq K)$.

    \textbf{Node Clustering} evaluates representation quality in an unsupervised setting, following the protocol in \cite{guo2025DMGC}. 
    The model partitions nodes into semantic groups without label supervision. 
    This involves disentangling the graph into homophilous and heterophilous views, followed by a dual-frequency fusion of filtered signals. 
    The model is optimized via a joint objective comprising reconstruction, contrastive alignment, and clustering losses. 
    Model performance is quantified using Normalized Mutual Information (NMI) and Adjusted Rand Index (ARI).
    
    Specifically, NMI quantifies the mutual dependence between predicted clusters $C$ and ground-truth labels $Y$, independent of label permutations: $\text{NMI}(Y, C) = 2 \cdot I(Y, C)/\left(H(Y) + H(C)\right)$, where $I(\cdot)$ and $H(\cdot)$ denote mutual information and entropy, respectively.
    ARI evaluates clustering similarity by considering all sample pairs and adjusting for chance grouping: 
    \begin{equation}
        \begin{aligned}
            \text{ARI} = \frac{\sum_{ij} \binom{n_{ij}}{2} - [\sum_i \binom{a_i}{2} \sum_j \binom{b_j}{2}] / \binom{n}{2}}{ \frac{1}{2} [\sum_i \binom{a_i}{2} + \sum_j \binom{b_j}{2}] - [\sum_i \binom{a_i}{2} \sum_j \binom{b_j}{2}] / \binom{n}{2} },
        \end{aligned}
    \end{equation} 
    where $n_{ij}$ represents the overlap between ground-truth cluster $i$ and predicted cluster $j$.

    For the modality retrieval task, which evaluates the model's capability to search for relevant instances across image and text modalities within a shared latent space, we employ contrastive learning objectives integrated with a temperature scaling factor of $\tau = 0.07$. 
    These retrieval models are trained for 500 epochs using a learning rate of $1 \times 10^{-3}$ and a batch size of 256. 
    To effectively mitigate over-fitting and ensure the generalizability of the learned representations, we implement an early stopping mechanism with a patience period ranging from 10 to 25 epochs.
    For modality generation tasks, the configurations are specifically tailored to accommodate high-dimensional generative processes and complex many-to-many relationships within the graph. 
    In the Graph-to-Text (G2Text) task, we set the learning rate to $1 \times 10^{-3}$ with a weight decay of $1 \times 10^{-2}$ and a batch size of 8, training the model for 15 epochs.
    We utilize the Self-Attention with Embeddings (SA-E) strategy to sample four multimodal neighbors and employ Graph Neural Networks (GNNs) for structural position encoding. 
    The decoder backbone is powered by the pre-trained Facebook OPT-125M, and to ensure parameter efficiency during adaptation, we support both Prefix Tuning and Low-Rank Adaptation (LoRA) with a rank of $r=64$.
    In the Graph-to-Image (G2Image) task, we employ a learning rate of $1 \times 10^{-4}$ and a batch size of 16 for a duration of 20 epochs.
    Following the InstructG2I framework, we adopt Semantic Personalized PageRank (PPR)-based neighbor sampling, selecting between 0 and 6 informative neighbors to provide structural context. 
    The image resolution is standardized at 256 to condition the Stable Diffusion v1.5 backbone through a Graph Classifier-Free Guidance mechanism.
    
    To ensure experimental fairness and minimize performance variance, we standardize core architectural parameters across all task configurations. 
    Unless otherwise specified, we employ CLIP-ViT-L/14 as the default feature encoder, with node embedding dimensions unified at 768 across all downstream tasks. 
    All optimization processes are performed using the Adam or AdamW optimizers. 
    For graph-task comparisons and ablations, we use five independent runs and report mean$\pm$standard deviation. Modality retrieval and generation follow the corresponding benchmark protocols; entries without ``$\pm$'' in the main paper are single-run evaluations and are not presented as five-run averages.

    \textbf{Modality Retrieval} evaluates the model's capability to retrieve relevant instances across disparate modalities, specifically focusing on image-to-text and text-to-image retrieval within the LION. 
    Given a query from a source modality, the model projects both the query and the candidate set from the target modality into a unified, high-dimensional latent space for direct comparison. 
    Pairwise similarity scores are subsequently computed to rank candidates in descending order of predicted relevance. 
    This task serves as a rigorous assessment of the robustness of cross-modal alignment and the discriminative power of the learned representations for information retrieval. 
    To provide a comprehensive quantitative evaluation, model performance is measured using standard ranking metrics, including Mean Reciprocal Rank (MRR) and Hits@K.

    \textbf{Graph-to-Text (G2Text)} is a generative task requiring the model to synthesize natural language descriptions conditioned on graph-structured multimodal inputs. 
    Diverging from traditional one-to-one multimodal mappings, G2Text addresses complex many-to-many relationships where target nodes interact with diverse multimodal neighborhoods. 
    Following the MMGL framework~\cite{yoon2023mmgl}, the process involves three stages: 
    (i) Neighbor Encoding, which maps diverse modalities into a compatible embedding space; 
    (ii) Graph Structure Encoding, which captures topology context via GNNs or Laplacian position encodings; 
    (iii) Integration, where structure-aware signals are infused into a pre-trained LLM. 
    This task evaluates the faithful translation of structured graph information into coherent text.
    Model performance is evaluated by using standard NLG metrics, including BLEU-4, ROUGE-L, and CIDEr.
    
    Specifically, BLEU-4 evaluates lexical accuracy and fluency by calculating the geometric mean of modified $n$-gram precisions ($p_n$) up to length 4: 
    \begin{equation}
        \begin{aligned}
            \text{BLEU-4} = \text{BP} \cdot \exp\left(\sum_{n=1}^4 w_n \log p_n\right).
        \end{aligned}
    \end{equation}
    ROUGE-L measures sentence-level recall based on the longest common subsequence, ensuring the output covers the comprehensive information content of the ground truth: 
    \begin{equation}
        \begin{aligned}
            \text{ROUGE-L} = \frac{(1 + \beta^2) R_{lcs} P_{lcs}}{R_{lcs} + \beta^2 P_{lcs}}.
        \end{aligned}
    \end{equation}
    CIDEr measures the consensus between generated captions and human references using TF-IDF weighting, emphasizing the semantic importance and distinctiveness of the generated terms:
    \begin{equation}
    \begin{aligned}
        \text{CIDEr}_n(c, r) = \frac{1}{M} \sum_{i=1}^{M} \frac{g^n(c) \cdot g^n(r_i)}{|g^n(c)| |g^n(r_i)|}.
    \end{aligned}
    \end{equation}

    \textbf{Graph-to-Image (G2Image)} focuses on synthesizing images conditioned on MAG, requiring visual content to reflect both textual prompts and complex graph-based associations. 
    Following the InstructG2I framework~\cite{jin2024instructg2i}, the workflow consists of: 
    (i) Semantic PPR-based Sampling, which selects informative neighbors; 
    (ii) Graph-QFormer Encoding, which transforms neighbors into graph conditioning tokens; 
    (iii) Conditional Generation, where a latent diffusion model is guided by these tokens via a graph classifier-free guidance mechanism. 
    This task evaluates the model's ability to generate images consistent with the styles or categories defined by the graph structure.
    Model performance is quantified by using CLIP-Score and DINOv2-Score for instance-level consistency.
    
    Specifically, CLIP-Score quantifies cross-modal semantic consistency. 
    Building upon the textual encoder ($E_T$) and visual encoder ($E_I$) in pre-trained CLIP, this metric determines whether generated images faithfully preserve the semantic content of corresponding graph descriptions:
    \begin{equation}
    \begin{aligned}
        \text{CLIP-Score}(I, T) = \max\left(100 \cdot \cos(E_I(I), E_T(T)), 0\right).
    \end{aligned}
    \end{equation}
    DINOv2-Score assesses visual fidelity and structural consistency using feature embeddings from a pre-trained DINOv2 encoder, ensuring high perceptual quality and structural resemblance to reference samples:
    \begin{equation}
    \begin{aligned}
        \text{DINOv2-Score}(I_{\text{gen}}, I_{\text{ref}}) = \cos(\text{DINO}(I_{\text{gen}}), \text{DINO}(I_{\text{ref}})).
    \end{aligned}
    \end{equation}

\bibliographystyle{IEEEtran}
\bibliography{example_paper}

\begin{IEEEbiography}[{\includegraphics
[width=1in,height=1.25in,clip,keepaspectratio]{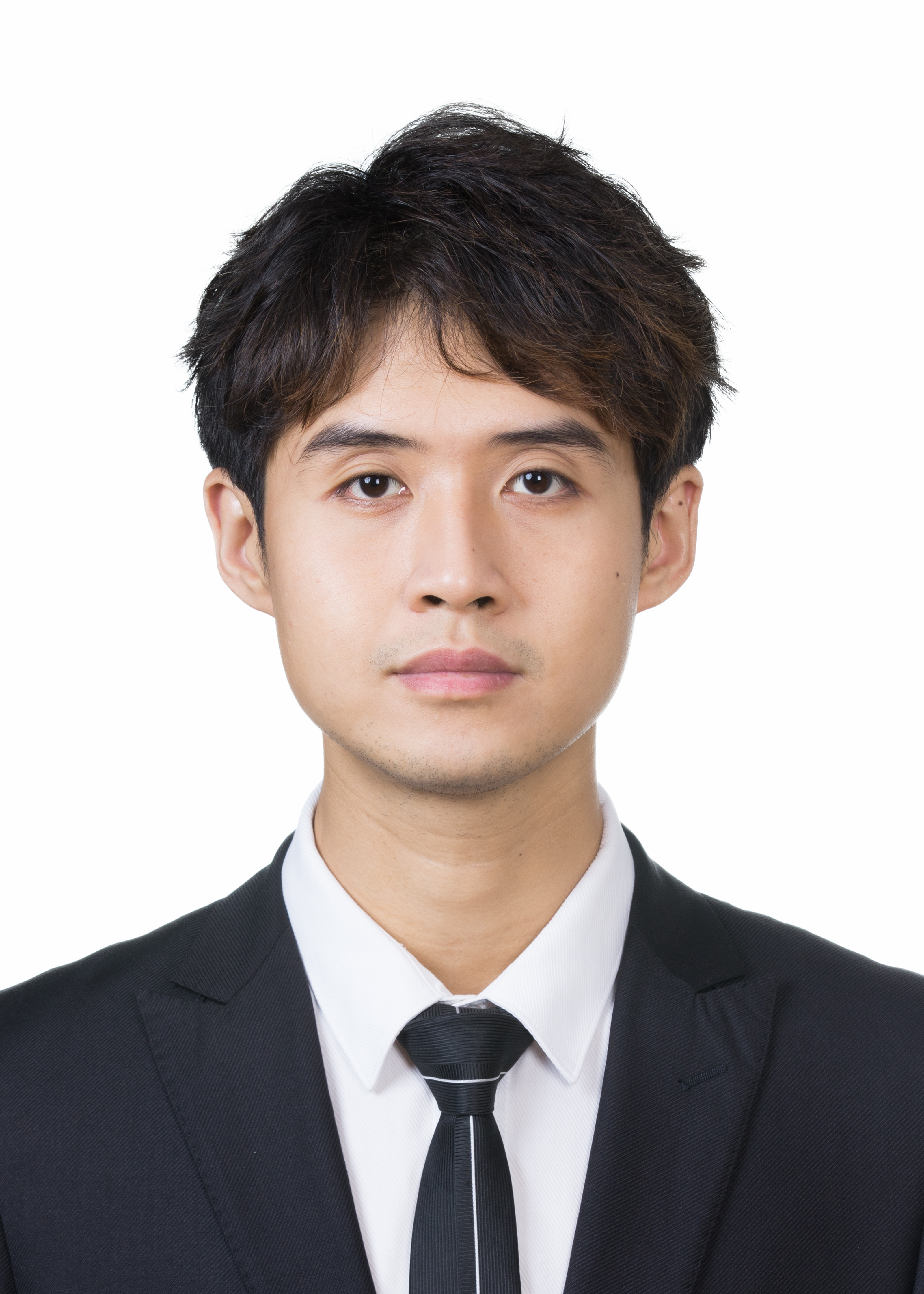}}]
{Xunkai Li} is currently working toward the PhD degree with the School of Computer Science, Beijing Institute of Technology. He received the BS degree in computer science from Shandong University in 2022. His research interests include data-centric graph intelligence, with a focus on graph databases and machine learning, multimodal understanding and generation, and AI for Science (AI4Science). He has published more than 20 papers in leading ML/DB/DM/AI conferences, including ICML, NeurIPS, VLDB, WWW, and AAAI.
\end{IEEEbiography}

\begin{IEEEbiography}[{\includegraphics
[width=1in,height=1.25in,clip,keepaspectratio]{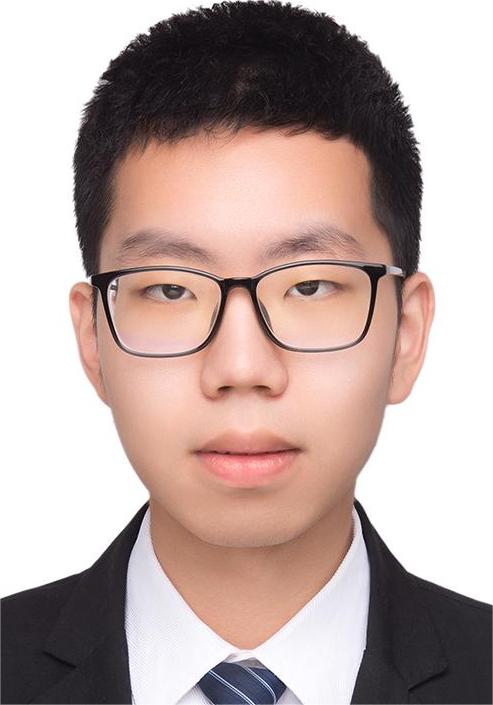}}]
{Zekai Chen} is currently pursuing his Master's degree in Computer Science at Beijing Institute of Technology under the supervision of Professor Rong-Hua Li. He obtained his Bachelor degree in Computer Science from the same institution in 2024. His research focuses on Graph Machine Learning and AI for Science (AI4Science).
\end{IEEEbiography}

\begin{IEEEbiography}[{\includegraphics
[width=1in,height=1.25in,clip,keepaspectratio]{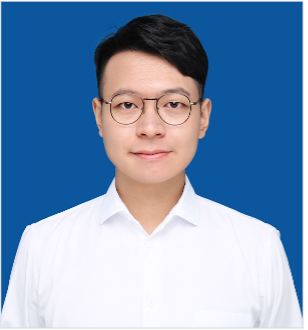}}]
{Zhengyu Wu} is currently pursuing his PhD degree in Beijing Institute of Technology (BIT), Beijing, China. His research interests include Federated Graph Learning, directed graph learning, and social network analysis. He has co-authored papers published in top ML/DB/DM/AI conferences such as ICML, VLDB, WWW, AAAI.
\end{IEEEbiography}

\begin{IEEEbiography}[{\includegraphics
[width=1in,height=1.25in,clip,keepaspectratio]{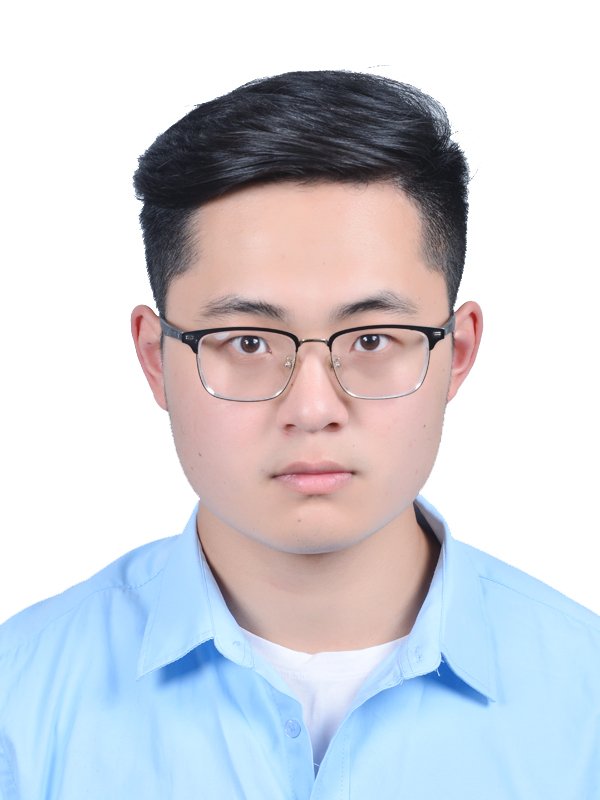}}]
{Henan Sun} received his Bachelor degree and Master degree from the College of Information Science, Northwest A\&F University in 2022, and the School of Computer Science \& Technology, Beijing Institute of Technology in 2024, respectively. His research interests include graph neural networks and graph out-of-distribution learning. He has published several academic papers in the ML community, such as ICDE, CEA and COMPSAC.
\end{IEEEbiography}

\vspace{-1em}

\begin{IEEEbiography}[{\includegraphics
[width=1in,height=1.25in,clip,keepaspectratio]{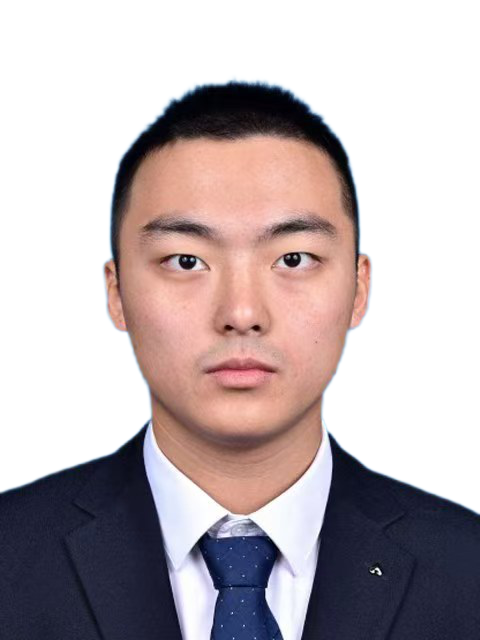}}]
{Daohan Su} is currently working toward the master's degree in the School of Computer Science \& Technology, Beijing Institute of Technology. He received his bachelor's degree from the same institution in 2024. His research interests lie in the areas of graph neural networks and large language models.
\end{IEEEbiography}
\vspace{-2em}
\begin{IEEEbiography}[{\includegraphics
[width=1in,height=1.30in,clip,keepaspectratio]{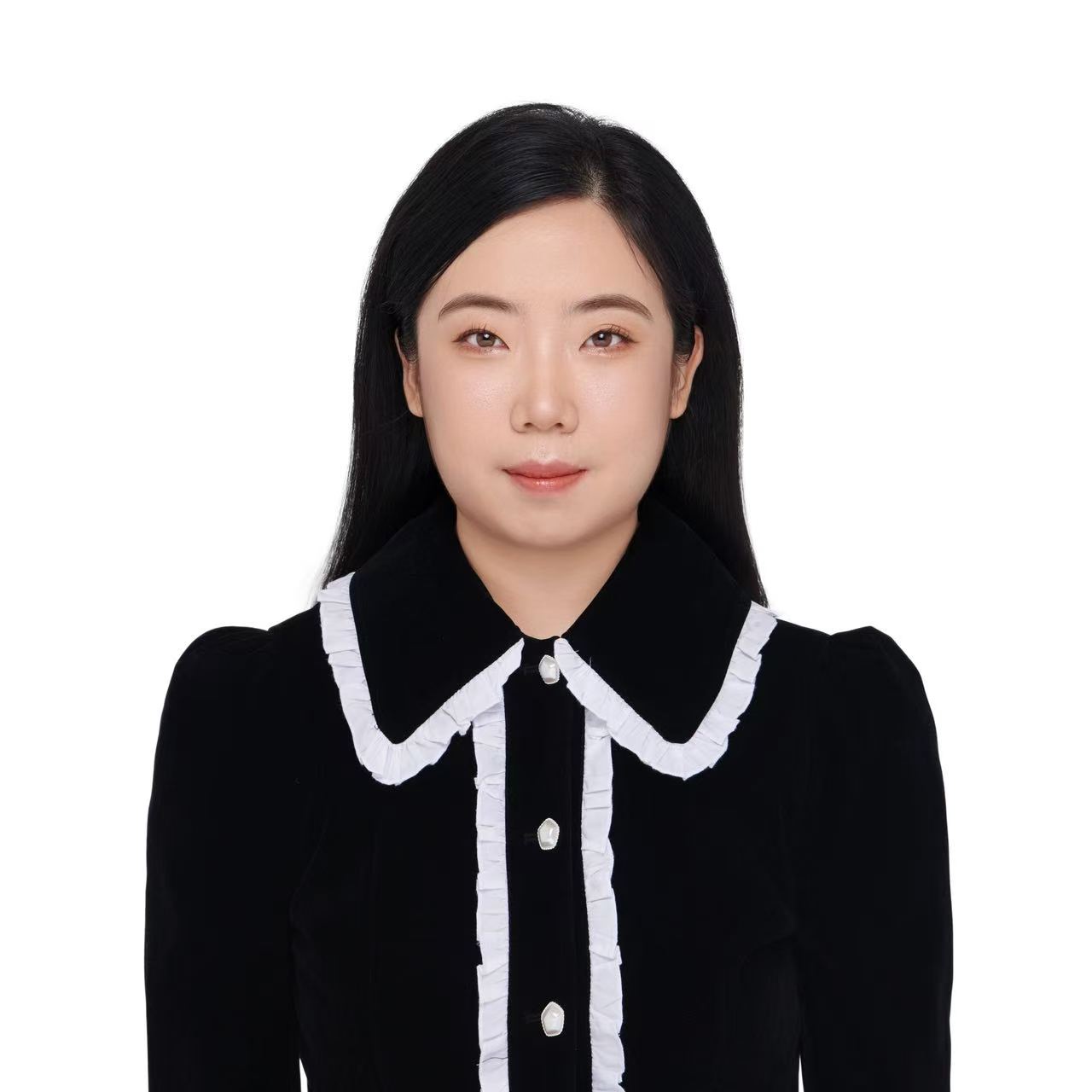}}]
{Guang Zeng} graduated from Northeastern University (Computer Science) in 2017 and currently works as an Algorithm Engineer at Ant Group. Specializing in time series forecasting, causal effect estimation, and graph computing, she has authored multiple academic papers and continues to explore innovative applications of cutting-edge technologies in real-world business scenarios.
\end{IEEEbiography}
\vspace{-2em}
\begin{IEEEbiography}[{\includegraphics
[width=1in,height=1.30in,clip,keepaspectratio]{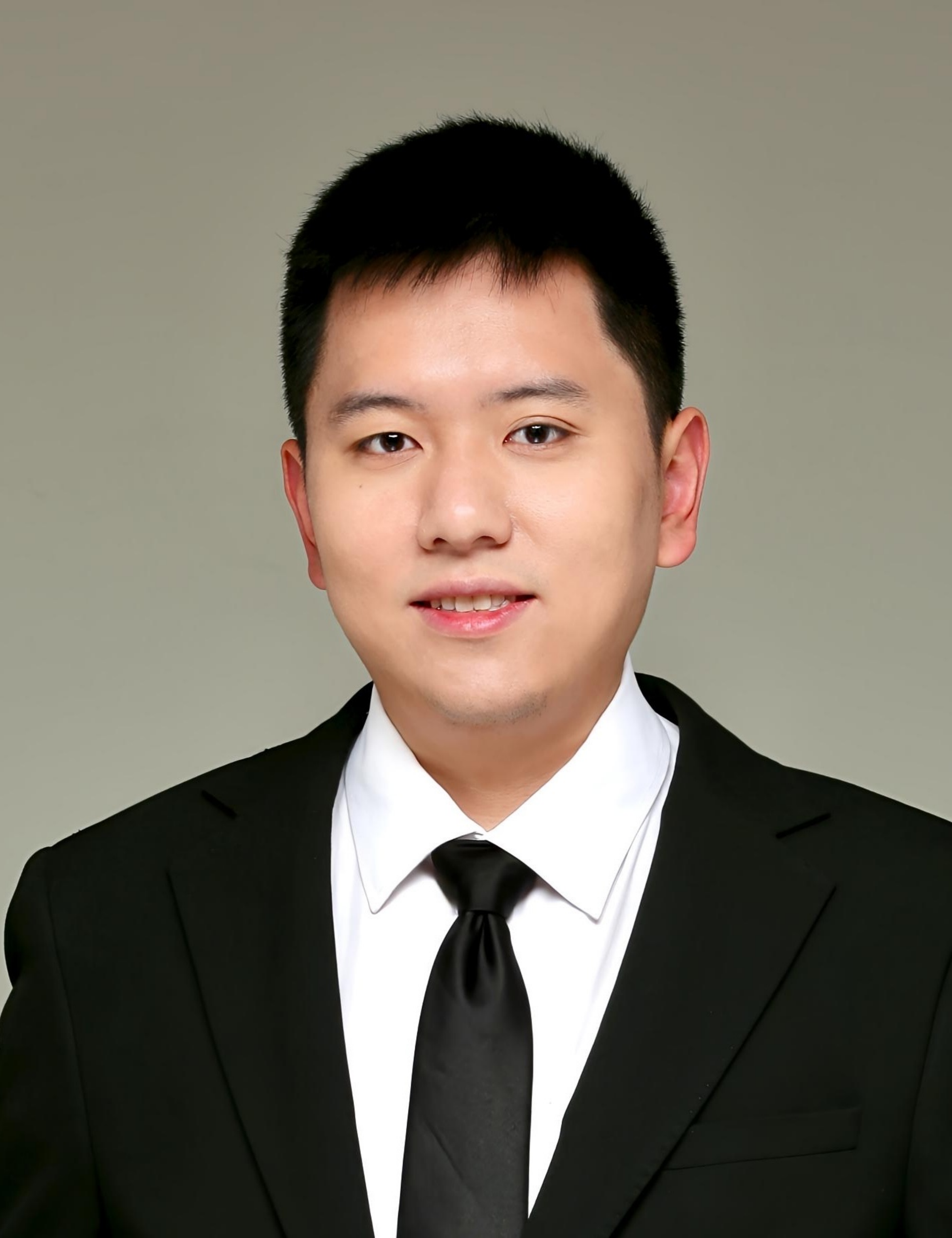}}]
{Hongchao Qin} is currently an Assistant Professor with Beijing Institute of Technology, China. He received the B.S. degree in mathematics, M.E. degree, and Ph.D. degree in computer science from Northeastern University, China in 2013, 2015, and 2020, respectively. His research interests include community detection, graph mining, knowledge graphs, financial fraud detection, vector databases, and RAG-based intelligent question answering systems.
\end{IEEEbiography}
\vspace{-2em}
\begin{IEEEbiography}[{\includegraphics
[width=1in,height=1.25in,clip,keepaspectratio]{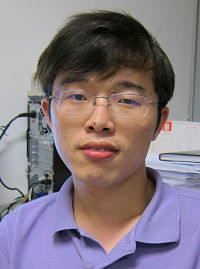}}]
{Rong-Hua Li} received his PhD degree from the Department of Systems Engineering and Engineering Management, The Chinese University of Hong Kong, in 2013, and joined the School of Computer and Software, Shenzhen University this year. He joined the School of Computer Science \& Technology, Beijing Institute of Technology in 2018 as a professor. His research interests include graph theory, graph computing systems, spectral graph theory, graph neural networks, and knowledge graphs. He is a committee member of VLDB 2024, KDD 2023, and WWW 2023.
\end{IEEEbiography}
\vspace{-2em}
\begin{IEEEbiography}[{\includegraphics
[width=1in,height=1.25in,clip,keepaspectratio]{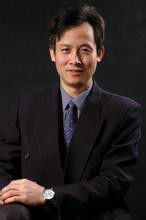}}]
{Guoren Wang} received his BS, MS, and PhD degrees from Northeastern University in 1988, 1991, and 1996. He joined the School of Computer Science \& Technology, Beijing Institute of Technology in 2018 as a professor. He is a member of the Discipline Evaluation Group of the State Council and the Expert Review Group of the Information Science Department of the National Natural Science Foundation of China. His research interests include uncertain data management, data-intensive computing, visual media data analysis, and bioinformatics.
\end{IEEEbiography}

\end{document}